\documentclass[a4paper,UKenglish,cleveref, autoref, thm-restate]{lipics-v2021}

\usepackage{breqn}
\usepackage{xspace}
\usepackage{tikz}
\usetikzlibrary{calc}
\usetikzlibrary{decorations}
\usetikzlibrary{patterns}
\usepackage{subcaption}
\usepackage{booktabs}
\usepackage{mathtools}
\usepackage{pdfrender}
\usepackage{needspace}

\usepackage [autostyle, english = american]{csquotes}
\MakeOuterQuote{"}

\title{Hardness of Multi-Agent Path Finding on Trees:\\A Unified Approach
} 

\titlerunning{Hardness of Multi-Agent Path Finding on Trees: A Unified Approach} %

\author{Tzvika Geft}{Rutgers University, NJ, USA %
}{%
}{0000-0002-3015-7514
}{}%

\authorrunning{T. Geft} %

\Copyright{Tzvika Geft} %

\ccsdesc[500]{Theory of computation~Problems, reductions and completeness}

\keywords{
Pebble Motion on Trees,
Coordinated Motion Planning,
Multi-Agent Path Finding,
Stack Rearrangement,
NP-Hardness} 

\category{} %

\relatedversion{} %

\nolinenumbers %

\EventEditors{Philip Bille, Seth Pettie, and Sabine Storandt}
\EventNoEds{3}
\EventLongTitle{34th Annual European Symposium on Algorithms (ESA 2026)}
\EventShortTitle{ESA 2026}
\EventAcronym{ESA}
\EventYear{2026}
\EventDate{August 31--September 4, 2026}
\EventLocation{L'Aquila, Italy}
\EventLogo{}
\SeriesVolume{388}
\ArticleNo{157}

\begin{document}

\maketitle

\begin{abstract}
This paper presents a simple framework that settles the complexity of Multi-Agent Path Finding (MAPF) on trees across standard objectives---distance, makespan, and flowtime---for both labeled and colored variants.
In MAPF, agents occupy the vertices of a graph and must move to target vertices without collisions while optimizing a given objective.
In the labeled case, the agents are distinct and have respective targets; in the colored case, agents of the same color are interchangeable.
While many MAPF variants are known to be intractable, several basic cases on trees have remained open.
We prove NP-hardness on trees for both labeled and 2-colored MAPF under all three objectives.
In particular, we resolve the classical Pebble Motion problem, where one pebble moves at a time to an adjacent empty vertex and the goal is to minimize the total number of moves.
Despite being  one of the most basic discrete motion models, its complexity on trees had remained open for several decades.
Moreover, for colored Pebble Motion, we give the first hardness result on any graph class, already with two colors, which is tight.

All of these results are established through the hardness of Stack Rearrangement, itself posed as an open problem, which asks to optimally rearrange items stored in stacks, and which we also prove to be NP-hard. Notably, the connection to stacks yields hardness already on very simple trees---subdivided stars---across all problems. Together, these results reveal a common tractability barrier that permeates several fundamental motion models, thereby unifying and strengthening prior hardness results. %

\end{abstract}

\newcommand{\tzvika}[1]{\textcolor{blue}{TG: #1}}

\newcommand{\wrap}[1]{\ensuremath{#1}\xspace}

\newcommand{\Npeb}{N}
\newcommand{\sol}{\ensuremath{\mathcal{S}}\xspace}

\newcommand{\sconf}{\ensuremath{S}\xspace}
\newcommand{\dconf}{\ensuremath{D}\xspace}
\newcommand{\spos}[1]{\wrap{\sconf(#1)}}
\newcommand{\dpos}[1]{\wrap{\dconf(#1)}}

\section{Introduction}

\emph{Pebble Motion on Graphs} (PMG), a generalization of the classical $(n^2-1)$-puzzle, asks for a shortest sequence of moves that takes a collection of labeled pebbles on the vertices of a graph from a start configuration to a target configuration, where a move relocates a single pebble to an adjacent empty vertex.
Its parallel-motion variant, \emph{Multi-Agent Path Finding} (MAPF), also known as \emph{Coordinated Motion Planning}, allows many pebbles---now called \emph{agents}---to move simultaneously,
and is typically cast as an optimization problem under objectives such as \emph{makespan}, the number of time steps until all agents reach their goals, and \emph{flowtime} (or \emph{sum-of-costs}), the sum of all the agents' individual arrival times.

In some cases, agents (or pebbles) can be naturally grouped into teams of equivalent agents rather than viewing each agent as unique.
In the \emph{$k$-colored} variant of PMG and MAPF~\cite{GoralyH10, coloredMAPF,DBLP:journals/ijrr/SoloveyH16}, agents are partitioned into $k$ teams, each with its own set of targets, and any assignment of agents within a team to its targets is valid provided all targets are eventually occupied.
The case where $k$ is the number of agents corresponds to the original labeled problem,
while the $k=1$ case is also known as \emph{unlabeled} or \emph{anonymous}.

Together, PMG, MAPF, and their variants serve as a prominent abstraction for multi-robot path planning
and have driven a large body of algorithmic work motivated by warehouse automation and related applications~\cite{DBLP:conf/socs/SternSFK0WLA0KB19,DBLP:journals/ai/SharonSFS15,DBLP:conf/aips/FelnerLB00KK18,BCP,LazyCBS}.

The \emph{feasibility} problem for PMG---finding any solution or reporting that none exists---is classically solvable in polynomial time for the labeled case~\cite {DBLP:conf/focs/KornhauserMS84}, as well as the colored case~\cite{GoralyH10}. %
The feasibility algorithms can produce solutions cubic in $n$, the number of vertices, while practical applications call for high quality solutions. 
In turn, the complexity of finding optimal solutions has received significant attention, with hardness results on general graphs, planar graphs, and 2D grids~\cite{Goldreich84,ratner1990n2,DBLP:conf/aaai/Surynek10,YuGraphs,DBLP:journals/ral/BanfiBA17,DBLP:journals/siamcomp/DemaineFKMS19,GH2021AAMAS,MAPF-socs23}. %
However, the picture on trees, the focus of this work, has not yet been settled.

Several recent works investigate motion on trees or tree-like graphs. %
Ardizzoni et al.~\cite{MAPF-PMT24} give an algorithm that finds feasible (not necessarily optimal) \emph{Pebble Motion on Trees} (PMT) solutions with $O(\Npeb n c + n^2)$ moves, for $n$ vertices, $\Npeb$ pebbles, and $c$ being the maximum corridor length in the tree, %
refining the classical $O(n^3)$ bound~\cite{DBLP:conf/focs/KornhauserMS84}.
Nakamigawa and Sakuma~\cite{NakamigawaS25} establish sharper upper bounds on the length of shortest PMG solutions, both on trees and on general graphs, in terms of graph diameter and cycle structure.
MAPF variants have been recently approached through parameterized complexity, where treewidth---a measure of how tree-like a graph is---has been used with other parameters to develop parameterized algorithms~\cite{MAPF-FPT2,FPT-DO-MAPF,DeligkasEGKR25}. %
On the hardness side,
Fioravantes et al.~\cite{MAPF-FPT2, FioravantesKKMOV25} prove NP-hardness of makespan-optimal MAPF on trees under various structural restrictions while 
Aichholzer et al.~\cite{AichholzerDKLLMRWW22} 
recently resolved the complexity of token swapping on trees---a structurally distinct setting where a move swaps two adjacent pebbles, which is not allowed in PMG or MAPF.

\subparagraph{Stack Rearrangement.}
One example of a motion planning problem that can be abstracted using PMT is
\emph{Stack Rearrangement} (SR).
The problem asks to find the shortest sequence of pop-and-push moves that reconfigures items distributed across a collection of bounded-capacity stacks to a target arrangement.
That is, each move removes an item from the top of one stack and places it at the top of another stack that has available capacity.
Han et al.~\cite{HanSBY18} introduce the problem to model robotic rearrangement in stack-like containers (e.g., stackable parking or retail shelves).
They observe that SR can be cast as a PMT problem and conjecture optimal SR to be intractable, but leave its complexity open.

\subsection*{Our Results}

We present a simple framework, rooted in the hardness of Stack Rearrangement, that settles the complexity of MAPF on trees across the standard objectives of total distance, makespan, and flowtime, for both labeled and colored variants (see \Cref{sec:defs} for formal definitions).
All variants are shown to be NP-hard on subdivided stars,\footnote{A \emph{subdivided star} is a graph consisting of any number of paths all joined at a single endpoint vertex.} among the simplest nontrivial trees, while all colored variants are shown to be NP-hard already for $k=2$.
To our knowledge, this is the first work to jointly establish the intractability of both labeled and colored MAPF across these three common objectives.
The results are summarized in \Cref{tab:comparison}.

\newcommand{\ck}{$\checkmark$}
\newcommand*{\bck}{%
  \textpdfrender{
    TextRenderingMode=FillStroke,
    LineWidth=.4pt, 
  }{\ck}%
}

\begin{table}[!ht]
\centering
\caption{Comparison to representative prior MAPF NP-hardness results that capture the state of the art.
Each entry specifies the optimization objective, the labeled vs. colored variant, and graph class.
Note that the table is not a complete survey and that works subsumed by the shown results at the graph type level are generally omitted.}
\label{tab:comparison}
\smallskip
\setlength{\tabcolsep}{4pt}
\renewcommand{\arraystretch}{1.12}
\begin{tabular}{llcccccc}
\toprule
\multirow{2}{*}{Graph type} & \multirow{2}{*}{Work} &
\multicolumn{3}{c}{Labeled} &
\multicolumn{3}{c}{Colored ($k=2$ unless specified)} \\
\cmidrule(lr){3-5} \cmidrule(lr){6-8}
& & Distance & Makespan & Flowtime
  & Distance & Makespan & Flowtime \\
\midrule
General      & \cite{YuGraphs}                     & \ck & \ck & \ck &     & \ck      & \ck \\
Planar       & \cite{YuPlanar}                     & \ck & \ck & \ck &     & \ck      & \ck \\
\multirow{2}{*}{2D subgrids}
             & \cite{GH2021AAMAS}                  & \ck &     &     &     &          &     \\
             & \cite{MAPF-socs23}                  &     & \ck & \ck &     & \ck      & \ck \\
Trees        & \cite{MAPF-FPT2,FioravantesKKMOV25} &     & \ck &     &     & $k=6$    &     \\
\midrule
\shortstack[l]{\textbf{Subdivided}\\\textbf{stars}}
             & \multirow{2}{*}{\shortstack[l]{\textbf{This}\\\textbf{work}}}
                                                          & \bck & \bck & \bck
                                                          & \bck & \bck & \bck
\\[3pt]
\shortstack[l]{\textbf{Max. degree}\\\textbf{3 trees}}
             &                                            & \bck &              & 
                                                          & \bck &              &              \\
\bottomrule
\end{tabular}
\end{table}

On trees, MAPF under the total distance objective, which minimizes total edge traversals, corresponds to the PMT optimization problem.
In this case the framework closes two particularly notable gaps:

\begin{itemize}
    \item \textbf{Pebble Motion on Trees (PMT)}. The complexity of optimal PMT has been implicitly open since the 1980s, when work on PMG emerged~\cite{DBLP:conf/focs/KornhauserMS84}.
    It was posed by Auletta et al.~\cite{AulettaMPP99}, which provides linear-time feasibility testing on trees, and recently re-raised by Deligkas et al.~\cite{FPT-DO-MAPF}.

    \item 
    \textbf{Colored PMG.}
    The complexity of optimal colored PMG has been open for \emph{any graph class} since at least Goraly and Hassin~\cite{GoralyH10}, who showed that feasibility is in P.
    Prior attempts, such as the systematic studies of the complexity of MAPF~\cite{YuGraphs,YuPlanar}, which consider
    standard objectives on general and later planar graphs, have left this case open.
    This work provides the first hardness result for colored PMG, already on trees and for $k=2$ (Theorem~\ref{thm:pmt2}).
\end{itemize}
\smallskip
Beyond subdivided stars, both the labeled and 2-colored PMT hardness results hold on comb-like trees of maximum degree~3 that naturally embed as 2D subgrids. %
The result for labeled PMT therefore recovers and strengthens the previously tightest 2D grid-based NP-hardness result~\cite{GH2021AAMAS}.

On the time-optimal side, we obtain the first NP-hardness result for flowtime on trees, as well as the first NP-hardness of 2-colored MAPF on trees for both makespan and flowtime, which was only known for more general graph classes.
Existing hardness results on trees are known only for the makespan objective, under various structural restrictions: 
maximum degree~5~\cite{MAPF-FPT2},
11 leaves~\cite{FioravantesKKMOV25},
or 9 internal vertices~\cite{FioravantesKKMOV25}.
Sharpening the makespan landscape, we simultaneously bound depth to~3 and the number of branching vertices, i.e., vertices of degree at least~3, to~1. %

For all considered objectives, the problems are efficiently solvable in the unlabeled (1-colored) case via a reduction to network flow~\cite{YuUnlabeledMAPF}.
Thus, the 2-colored hardness results form the exact tractability threshold.

\bigskip
\noindent\textbf{Stack rearrangement as a unifying source of hardness.}
All of the above results are established through reductions from Stack Rearrangement, whose own complexity we also settle:

\begin{itemize}
    \setlength{\topsep}{0pt}
    \item Labeled Stack Rearrangement is NP-hard, even for stacks of depth~3 (Theorem~\ref{thm:lsr}).

    \smallskip
    \item 2-colored Stack Rearrangement is NP-hard, even for deciding whether every item can be moved at most once (Theorem~\ref{thm:2lsr}).
\end{itemize}
\smallskip
Here too hardness for two colors is tight, as the unlabeled case is straightforward to solve efficiently.

The connection to stacks yields hardness on subdivided stars across MAPF variants: each stack becomes a path from a shared central vertex, and the serial bottleneck at that vertex tightly couples the stack and tree formulations.

As NP membership holds for all studied problems~\cite{DBLP:conf/focs/KornhauserMS84, GoralyH10, HanSBY18}, we only prove NP-hardness.

\subparagraph{Concurrent work.} During the review of this paper, we learned of independent work establishing hardness for labeled MAPF on trees under the total distance and flowtime objectives~\cite{KoyfmanEtAl2026}.
These concurrent results use different techniques and do not establish hardness on subdivided stars.

\section{Problem Definitions and Notation} \label{sec:defs}
Let $G=(V,E)$ be a connected undirected graph.
Let \(d(u,v)\) denote the shortest-path distance between vertices \(u\) and \(v\) in \(G\). %
We consider $\Npeb\le |V|$ uniquely labeled pebbles, also called agents, $p_1,\ldots,p_{\Npeb}$.
The pebbles' locations are specified by a \emph{configuration} $S$, which is an injective mapping assigning each pebble $p$ to a unique vertex $S(p)\in V$.

\begin{definition}[Pebble Motion on Graphs (PMG)]
Let $G = (V, E)$ be a connected undirected graph.
A \emph{move} consists of transferring a single pebble from its current vertex to an adjacent unoccupied vertex.
Given two configurations \sconf (start) and \dconf (target/goal), the PMG problem asks for a minimum-length sequence of moves that transforms \sconf into \dconf.
\end{definition}

\begin{definition}[Pebble Motion on Trees (PMT)]
A PMT instance is a PMG instance $I = (T, \sconf, \dconf)$ where $T$ is a tree.
\end{definition}

\begin{definition}[Multi-Agent Path Finding (MAPF)]\label{def:mapf}
A MAPF instance is given by a graph $G = (V, E)$ and two configurations $S$
and $D$ as in PMG.
Time is discretized and at each time step an agent can either wait at its current vertex or move to an adjacent vertex:
A \emph{timed path} for an agent $p$ is a sequence of vertices $(u_0, u_1, u_2, \ldots)$
with $u_0 = \spos{p}$, where each consecutive pair is either identical (a \emph{wait})
or adjacent in $G$.
A MAPF \emph{solution} is a set of timed paths, one per agent, such that every agent
eventually reaches and remains at its goal vertex, and the paths are
\emph{conflict-free}: no two agents coincide at a vertex at the same time step
(\emph{vertex conflict}), and no two agents traverse the same edge in opposite
directions at the same time step (\emph{edge conflict}).
In particular, an agent may enter a vertex $v$ in the same time step in which another agent leaves $v$, allowing agents to move in a train-like manner.

\subparagraph{Optimization objectives.}
The \emph{completion time} of an agent $p$ is the first time step $\tau_p$ at which
$p$ reaches $\dpos{p}$ and stays there.
The \emph{makespan} of a solution is $\max_p \tau_p$, i.e., the time it takes the last agent to reach its goal.
The \emph{flowtime} (also known as \emph{sum-of-costs}) of a solution is $\sum_p \tau_p$, i.e., the sum of agent completion times.
The \emph{total distance} of a solution is the total number of edge traversals over all timed paths.
\end{definition}

\subparagraph{Remark.} As MAPF with total distance objective on trees corresponds to PMT, we generally refer to the latter in the sequel.

We now introduce the stack-based formulation relevant to this work.

\begin{definition}[Labeled Stack Rearrangement (LSR)~\cite{HanSBY18}]
We are given a set of $\Npeb$ distinct objects, $\{o_1, o_2, \ldots, o_{\Npeb}\}$, and $w+1$ available stacks, each with a maximum capacity (depth) of $d$ such that $\Npeb \leq wd$.
In this problem, objects can be moved from the top of one stack to the top of another, via a \emph{pop-and-push action}, with the constraint that objects cannot be moved from an empty stack or to a full stack. An object's position in the stacks is defined by a 2-tuple, indicating its stack and its depth within that stack. 
A \emph{configuration} specifies the locations of all objects in the stacks.
Given an initial configuration, the objective of the LSR problem is to rearrange the objects to match a goal configuration using the minimum number of pop-and-push moves.
\end{definition}

We also consider the \emph{colored} variant of each of the labeled problems defined above.
In the colored variant, objects are partitioned into color classes and are interchangeable within each class.
Whenever the terms labeled or colored are not used, we default to the labeled case.

\begin{definition}[Colored variants: $k$-colored PMG, $k$-colored MAPF, $k$-colored Stack Rearrangement ($k$-colored SR)]
In the \emph{$k$-colored} variant, the objects (or pebbles/agents) are partitioned into $k$ color classes. The goal specifies, for each target position, only the required color, and any object of that color may occupy it.
\end{definition}

\section{Hardness of Stack Rearrangement}

We begin with the hardness of Stack Rearrangement variants, which will serve as base problems for all the other reductions.

\begin{theorem} \label{thm:lsr}
Labeled Stack Rearrangement (LSR) is NP-hard, even when restricted to stacks of depth 3.
\end{theorem}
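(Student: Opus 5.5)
We reduce from a problem whose optimum is governed by a feedback-set structure. Our source is \emph{Feedback Vertex Set} on directed graphs, in a bounded-degree form; the natural variant is in-degree and out-degree at most~2, which remains NP-hard. The guiding intuition comes from labeled sorting with a buffer. Every object must be moved at least once if its start and goal positions differ. Beyond that, each object has to be moved an extra time, into a temporary position, whenever it lies on a cyclic dependency that no other extra move resolves. The optimum therefore equals $M + \tau$, where $M$ is the number of misplaced objects and $\tau$ is the minimum number of objects parked temporarily to break all dependency cycles.

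The construction proceeds in four steps.

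\begin{enumerate}
\item \textbf{Vertex gadgets.} Given a digraph $H$, build one stack per vertex $v$, of depth~3. It holds a designated "vertex object" $o_v$ together with objects encoding the out-arcs of $v$.
\item \textbf{Arc dependencies.} Arrange the goal configuration so that an arc $(u,v)$ forces the following: the goal slot of $o_u$, or of the relevant arc object, becomes available only after some object currently in $v$'s stack has left. In effect, placing $u$'s items into their final positions requires $v$'s stack to be cleared first.
\item \textbf{Free space.} The "$+1$" stack, together with slack capacity arising from $\Npeb\le wd$, supplies buffer slots. A vertex whose object is parked in the buffer costs exactly one extra move and releases all of its dependencies.
\item \textbf{Bounded degree.} The bounded in- and out-degrees let every gadget fit in stacks of depth~3. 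If degree~2 does not fit, arcs can be split through chains of auxiliary vertices that preserve the feedback structure, or we reduce from a restricted variant.
\end{enumerate}

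Correctness requires two directions. For \emph{completeness}, given a feedback vertex set $F$ of size $k$, first park the objects of $F$ in the buffer. The dependency graph on the remaining vertices is then acyclic, so we process stacks in reverse topological order, moving each object directly to its goal. Finally we return the parked objects. The total is $M+k$ moves.

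For \emph{soundness}, take any solution with at most $M+k$ moves. Call an object "doubly moved" if it is moved at least twice. There are at most $k$ such objects. Charge each one to the vertex gadget containing it. We must then show that the charged vertices hit every directed cycle. The argument considers the first time any object of a cycle's gadgets reaches its goal. At that moment the required predecessor object has not yet moved out, and this yields a contradiction unless some gadget on the cycle used a temporary move.

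The main obstacle is designing the depth-3 gadgets so that soundness holds rigorously. The difficulty is that an extra move inside one gadget must not simultaneously release dependencies in two unrelated cycles beyond what the encoding intends. In addition, the limited buffer capacity must not create spurious cheaper schedules, for instance by stacking temporarily parked objects onto other non-goal stacks. The first thing to try is making every non-buffer stack full at start and at goal. This forces any temporary placement to go into the single spare stack or into already-vacated slots. One then argues, via an exchange argument, that an optimal solution can be normalized so that every temporarily placed object is a vertex object $o_v$, which makes the charging to a feedback vertex set exact.
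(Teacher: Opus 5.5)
Your proposal is a plan rather than a proof. The depth-3 gadget on which everything rests is never defined, and you yourself list it as the main obstacle. That gadget must make an arc force a precedence between two vertex objects, and must make each extra move break cycles in a controlled way. Steps~1--2 only say what the gadget should achieve, so neither direction can be checked. The soundness charge to ``the vertex gadget containing it'' is also undefined until you say where goals are, since objects start in one stack and end in another.

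The missing idea, as in the paper, has three parts:
\begin{itemize}
\item Give each $o_i$ separate start and goal stacks, at the bottom of each.
\item Encode an arc $(i,j)$ by \emph{two} objects. Object $e_{ij}$ starts above $o_j$ in $o_j$'s start stack, and its goal is the bottom of a private arc stack. That arc stack initially holds $e'_{ij}$, whose goal lies above $o_i$ in $o_i$'s goal stack.
\item If both arc objects move once, this forces a chain: the final move of $o_i$ precedes the move of $e'_{ij}$, which precedes the move of $e_{ij}$, which precedes the final move of $o_j$.
\end{itemize}
In- and out-degree at most $2$ give depth $3$, and the lower bound is $n+2m$. Parking $e_{ij}$ breaks exactly one arc, so the natural source is Minimum Feedback Arc Set, with cost $n+2m+\ell$. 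Soundness is then short: order the vertices by the final moves of the $o_i$, and let $E'$ be the arcs whose $e_{ij}$ or $e'_{ij}$ moved more than once. Your FVS route would additionally have to show that extra moves spent on arc objects can be charged to vertices hitting the same cycles, which you do not address.

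The parts you do commit to also fail. Suppose every non-buffer stack is full and a single spare stack of depth~3 serves as the buffer. Then completeness breaks: at most three objects can be parked at once, and they must leave in LIFO order. So ``first park all of $F$, process the rest, then return them'' is impossible once $|F|>3$. Moreover, parked objects may initially be covered, so they cannot be parked ``first''. The paper instead adds $m$ empty stacks, so each relocated $e_{ji}$ gets its own stack and is parked at the moment it is uncovered during the topological sweep.

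Similarly, storing $o_v$ together with all its out-arc objects in one stack introduces LIFO precedences between unrelated arcs. In the paper's layout the only shared stacks are the $e_{ij}$ above $o_j$ and the $e'_{ij}$ above $o_i$'s goal. These are harmless because the sweep handles each group in one batch, immediately before $o_j$ (resp.\ after $o_i$).
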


\begin{proof}
To establish NP-hardness, we perform a reduction from the Minimum Feedback Arc Set (MFAS) problem, which asks to find the smallest set of edges in a directed graph whose removal results in a directed acyclic graph (DAG).
We use the fact that MFAS remains NP-hard for maximum degree 3~\cite{DBLP:books/fm/GareyJ79}; that is, we can assume the in-degree and out-degree of each vertex in $G$ are at most 2.

Given a directed graph $G = (V, E)$ with vertices $V = \{1, 2, \ldots, n\}$ and $m$ edges, we construct a corresponding LSR instance $I := I(G)$ as follows:
For each vertex $i$ in $G$, we introduce a corresponding object $o_i$.
For each edge $e=(i,j)$ in $G$, we create two objects denoted by $e_{ij}$ and $e'_{ij}$.
For each $o_i$, we designate a start stack where $o_i$ is initially at the bottom and a goal stack where it needs to end up at the bottom.
For an edge $(i,j)$, we place the start position of $e_{ij}$ in the start stack of $o_j$, anywhere above $o_j$, and set its goal to be at the bottom of its own stack. That is, the order of the start positions of the $e_{ij}$'s is arbitrary but they are all initially above $o_j$.
The start position of $e'_{ij}$ coincides with the goal of $e_{ij}$, and the goal of $e'_{ij}$ is in the goal stack of $o_i$, anywhere above $o_i$'s goal. See \Cref{fig:hardness}.
Note that an edge $(i,j)$ creates a dependency in an LSR solution with a lower bound cost (where every object is moved once), where $o_i$ must be moved to its goal before $o_j$. %

\begin{figure}
    \centering
    \begin{tikzpicture}[
    baseline=(current bounding box.south),
    stack/.style={line width=1.1pt},
    srect/.style={rectangle, draw=black, line width=0.9pt,
        minimum width=1.10cm, minimum height=0.72cm, inner sep=0pt, outer sep=0pt},
    trect/.style={rectangle, draw, line width=1.8pt,
        minimum width=1.22cm, minimum height=0.72cm, inner sep=0pt, outer sep=0pt, fill=white},
    mrect/.style={trect, draw=red!85!black, fill=green!18}
]
\def\stackW{1.36}
\def\cx{0.68} %
\def\stackH{2.80}
\def\gap{1.40}
\coordinate (L) at (0,0);
\coordinate (M) at (\stackW+\gap,0);
\coordinate (R) at (2*\stackW+2*\gap,0);
\foreach \X in {L,M,R}{
    \draw[stack] (\X) -- ++(0,\stackH);
    \draw[stack] ($(\X)+(\stackW,0)$) -- ++(0,\stackH);
    \draw[stack] (\X) -- ++(\stackW,0);
}
\def\ybot{0.44}
\def\ymid{1.28}
\def\ytop{1.95}
\node[srect, fill=red!25]  (lTop) at ($(L)+(\cx,\ytop)$) {\footnotesize $\spos{e_{ij}}$};
\node at ($(L)+(\cx,\ymid)$) {\LARGE$\vdots$};
\node[srect, fill=blue!20] (lBot) at ($(L)+(\cx,\ybot)$) {\footnotesize $\spos{o_j}$};
\node[mrect] (mBox) at ($(M)+(\cx,\ybot)$) {\footnotesize $\spos{e'_{ij}}$};
\node[anchor=south, inner sep=2pt] at (mBox.north) {\footnotesize $\dpos{e_{ij}}$};
\node[trect, draw=green!60!black]   (rTop) at ($(R)+(\cx,\ytop)$) {\footnotesize $\dpos{e'_{ij}}$};
\node at ($(R)+(\cx,\ymid)$) {\LARGE$\vdots$};
\node[trect, draw=yellow!60!black]  (rBot) at ($(R)+(\cx,\ybot)$) {\footnotesize $\dpos{o_i}$};
\draw[->, gray!70, line width=1.0pt]
    ($(L)+(\stackW+0.08, \ytop)$) --
    ++(0.30,0) --
    ++(0, \ybot-\ytop) --
    ($(M)+(-0.08, \ybot)$);
\draw[->, gray!70, line width=1.0pt]
    ($(M)+(\stackW+0.08, \ybot)$) --
    ++(0.30,0) --
    ++(0, \ytop-\ybot) --
    ($(R)+(-0.08, \ytop)$);
\end{tikzpicture}
    \vspace{-0.8cm}
    \caption{The dependency created by an edge $(i,j)$. (Left) Start stack for $o_j$; (middle) the stack for $(i,j)$; (right) the goal stack for $o_i$.}
    \label{fig:hardness}
\end{figure}

Finally, add $m$ empty stacks.
Therefore, the total number of objects is $n+2m$, and there are $2n+2m$ stacks.
Since every vertex in $G$ has in-degree and out-degree at most 2, a stack depth of 3 suffices.

We now prove the correctness of the reduction.
We show that $G$ has a feedback arc set $E' \subseteq E$ with $\ell$ edges if and only if $I$ has a solution with $n+2m+\ell$ moves.

Let $G'$ be the DAG obtained by removing the edges in $E'$ from $G$. Similarly, let $I'$ denote the instance obtained by removing all objects corresponding to $E'$ from $I$. We can use the topological ordering of $G'$ to solve $I'$ with exactly one move per object as follows.
For each vertex $i$ according to the topological ordering:
\begin{enumerate}
    \item For each incoming edge $(j, i)$, move $e_{ji}$. These objects are moved one by one per the ordering imposed by their start positions in the start stack of $o_i$.
    \item Move $o_i$.
    \item Then, for each outgoing edge $(i, j)$, move $e'_{ij}$ according to the ordering in the goal stack of $o_i$.
\end{enumerate}
The moves in step 1 are valid because when we handle vertex $i$, all the $e'_{ji}$'s have already been moved as part of handling step 3 in previous vertices. Moves in steps 2 and 3 are valid as no object blocks them.

We can now solve $I$, the original instance, based on the solution above.
In step 1, any $e_{ji}$ that corresponds to an edge in $E'$ is moved to an empty stack instead of its goal (thereby breaking the dependency of the edge).
Finally, after performing all the moves above, we can move all such $e_{ji}$'s to their goals one by one. This solution introduces $\ell$ additional moves beyond the lower bound for the number of moves, resulting in a total of $n+2m+\ell$ moves, as required.

Conversely, suppose we have a solution with $n+2m+\ell$ moves.
Let $E'$ denote the set of edges for which either $e_{ij}$ or $e'_{ij}$ moves more than once.
We have $|E'| \le \ell$.
We claim that $E'$ is a feedback arc set for $G$:
Order the vertices according to the order in which the corresponding objects make their
final moves.
We claim that this results in a topological ordering of $G'$, the graph where the edges of $E'$ are removed.
Indeed, let $(i,j)$ be an edge of $G'$.
Both $e_{ij}$ and $e'_{ij}$ move exactly once.
Since $e'_{ij}$ reaches its goal above $o_i$, it must move after the final move of $o_i$.
Moreover, $e'_{ij}$ must move before $e_{ij}$, which in turn must move before the final
move of $o_j$, since $e_{ij}$ initially lies above $o_j$.
Thus $o_i$ makes its final move before
$o_j$, i.e., vertex $i$ appears before $j$ in the ordering.
Hence the ordering is topological for $G'$, and $E'$ is a feedback arc set.
\end{proof}

\begin{theorem} \label{thm:2lsr}
2-colored SR is NP-hard, even for deciding whether there is a solution in which every item moves at most once.
\end{theorem}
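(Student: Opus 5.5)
We reduce from 3-SAT. Once each item may move at most once, a solution consists of two pieces of data. The first is a bijection, respecting colors, between the items that move and the goal slots that are not already correctly occupied by a non-moving item. The second is an order of the moves. The order must respect these precedences:
\begin{itemize}
\item an item moves only after every item above it in its start stack has moved;
\item an item is placed only after every slot below its destination has received its final item;
\item an item is placed only after the start occupant of its destination slot has vacated it.
\end{itemize}
I will first state this characterization and prove it: a one-move-per-item solution exists iff some color-respecting matching has an acyclic precedence digraph. This is routine, by taking a topological order as the move sequence and checking that capacity is never exceeded. Capacity can be handled by giving every stack enough headroom, or by noting that each move fills a slot it vacated earlier. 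In the labeled case the matching is forced, so the question collapses to acyclicity as in Theorem~\ref{thm:lsr}. With two colors the matching is free, and the combinatorial choice lives exactly there.

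\emph{Gadgets.} Colors are white and black.
\begin{itemize}
\item \emph{Variable gadget for $x$.} Two white items $a_x, b_x$ sit on top of start stacks, and there are two white goal slots $s_x, t_x$ reachable only by them. The two possible matchings, $a_x\to s_x, b_x\to t_x$ versus $a_x\to t_x, b_x\to s_x$, encode $x=$ true or false.
\item \emph{Forced structure.} Black items are used as rigid separators. Their goal slots coincide with their start slots, or they have a unique feasible partner by counting within each stack. This makes the only genuine freedom the per-variable swap.
\item \emph{Literal wiring.} Each literal occurrence is realized as a precedence arc, in the style of Figure~\ref{fig:hardness}: an item above $a_x$ whose goal sits above $s_x$, and so on. The arc is present in the precedence graph iff the literal is false.
\item \emph{Clause gadget for $(\ell_1\vee\ell_2\vee\ell_3)$.} The three literal arcs are chained into a directed cycle through black connector items, the same way the $e_{ij},e'_{ij}$ pairs created dependencies in Theorem~\ref{thm:lsr}. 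The cycle is closed iff all three literals are false.
\end{itemize}
Then the precedence graph is acyclic iff every clause has a true literal.

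\emph{Correctness, in order.}
\begin{enumerate}
\item Prove the matching/acyclicity characterization.
\item Show that every color-respecting matching of the constructed instance is determined by a truth assignment. This is a counting argument per stack: within each stack the number of white items equals the number of white goal slots, except inside variable gadgets.
\item Show that the precedence arcs within gadgets are acyclic, and that any cycle must traverse a clause gadget entirely. Here variable gadgets are placed in separate stacks, so no cross-gadget arcs exist beyond those intended.
\item Conclude both directions: a satisfying assignment gives an acyclic matching, and hence a solution with every item moved once. Conversely, a one-move solution yields a matching, hence an assignment, and acyclicity forces every clause to be satisfied.
\end{enumerate}

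\emph{Main obstacle.} I expect Step~2 to be the difficulty: ruling out unintended matchings in which white items from different gadgets swap roles. With only two colors, interchangeability is global, unlike the labeled setting. My first attempt is to make all stacks except the variable gadgets ``rigid'', with each white item's goal at its own start slot, or reachable only through a unique vacated slot. I would then argue that any deviation creates a precedence self-loop, namely an item having to be placed below something that must stay above it. If that fails, I would instead reduce from a structured cycle-cover or 3-dimensional-matching variant, where cross-gadget swaps correspond to legitimate choices.
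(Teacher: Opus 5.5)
Your Step~1 characterization is fine: a one-move-per-item solution is exactly a color-respecting assignment together with a topological order of the three precedence types. Capacity is automatic, since each item is pushed directly into its goal slot with nothing above it. The gap is everything after that. No instance is actually specified, and the property the reduction depends on is asserted rather than enforced: that the only freedom in the matching is the per-variable swap. In 2-colored SR nothing can make $s_x,t_x$ \emph{reachable only by} $a_x,b_x$. Any white item may be assigned to any white slot, and an assignment can only be excluded by exhibiting a precedence cycle. Your remedy does not supply such cycles:
\begin{itemize}
\item An item whose goal equals its start slot is not rigid. If anything beneath it must move, it must leave, and since it moves only once it ends in a different stack. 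Its slot is then filled by some other same-colored item from elsewhere.
\item Items in untouched stacks may also move voluntarily. Nothing in your construction prevents a white item $w$ from the top of such a stack from filling $s_x$ while $a_x$ takes $w$'s vacated slot. That detaches the arcs you attach at $s_x$ from $a_x$ and $b_x$.
\item The per-stack counting in Step~2 cannot rule any of this out. A balanced exchange of same-colored items between stacks preserves every per-stack count, and such exchanges are the only way items ever move.
\end{itemize}

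The literal wiring and Step~3 have the same problem. The one concrete wiring you give fails. If $u$ lies above $a_x$ in its start stack and $u$'s goal lies above $s_x$, then under the assignment of $a_x$ to $s_x$ the precedences force $u$ before $a_x$ (start stack) and $a_x$ before $u$ (goal stack). This 2-cycle simply forbids that assignment instead of toggling an arc inside a clause cycle. Moreover, the matching itself creates arcs between whichever stacks it connects, so placing gadgets in separate stacks does not isolate them. All occurrences of $x$ also route through the same two items $a_x,b_x$, so a cycle could splice partial chains of different clauses through them. Your claim that every cycle lies inside one clause gadget is therefore unsupported.

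The paper avoids all of this by exploiting interchangeability rather than suppressing it. It reduces from strongly NP-complete \textsc{3-Partition}, using stacks that are monochromatic at the start and monochromatic in the other color (or empty) at the goal: item stacks ($a_i$ red to $a_i$ blue), $m-1$ checkpoint stacks ($K$ blue to $K$ red), an empty stack needing $K$ red, and a $K$-blue filler stack that must end empty. With one move per item, every stack holding wrong-colored items must be fully cleared before it receives anything. Item identities are then irrelevant, and a solution is just an order of clearing stacks. The counters $r,b\ge 0$ of available red and blue target positions force the item stacks cleared between consecutive checkpoints to sum to exactly $K$, i.e.\ to form a triple.
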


\begin{proof}
We reduce from \textsc{3-Partition}: given $3m$ positive integers $a_1, \ldots, a_{3m}$ with each $a_i \in (K/4, K/2)$ and $\sum a_i = mK$, decide whether they can be partitioned into $m$ triples each summing to~$K$. This problem is strongly NP-complete~\cite{DBLP:books/fm/GareyJ79}.

\medskip\noindent\textbf{Construction.}
Given a \textsc{3-Partition} instance, we construct the following stacks, as shown in \Cref{fig:2colored-sr}.
We use red and blue as the two colors:
\begin{itemize}
    \item For each \(a_i\), create an \emph{item stack}, containing \(a_i\) red items
    initially and requiring \(a_i\) blue items in the target configuration.

    \item Create \(m-1\) \emph{checkpoint stacks}, each containing \(K\) blue items
    initially and requiring \(K\) red items in the target configuration.

    \item Create one additional \emph{red target stack}, initially empty and requiring
    \(K\) red items in the target configuration.

    \item Create one \emph{filler stack}, initially containing \(K\) blue items and empty
    in the target configuration.
\end{itemize}
The construction uses \(3m+(m-1)+2=4m+1\) stacks of depth \(K\).
It contains \(mK\) red items and \(mK\) blue items.
Since \textsc{3-Partition} is strongly NP-complete, the construction has polynomial size.

\begin{figure}[t]
\centering
\resizebox{0.75\columnwidth}{!}{%
\begin{tikzpicture}[
    box/.style={draw, thick, minimum width=1.0cm, minimum height=0.39cm},
    ritem/.style={fill=red!30},
    bitem/.style={fill=blue!25,
        postaction={pattern=north east lines, pattern color=blue!70}},
    stacklbl/.style={font=\Large, anchor=north}
]
\def\stackHW{0.7}
\def\boxH{0.43}
\def\wallCap{8}
\def\startBase{4.4}
\def\targetBase{0}

\newcommand{\drawStack}[4]{%
    \pgfmathsetmacro{\xL}{#1-\stackHW}%
    \pgfmathsetmacro{\xR}{#1+\stackHW}%
    \pgfmathsetmacro{\wallH}{\wallCap*\boxH}%
    \draw[line width=1pt] (\xL,{#2+\wallH}) -- (\xL,{#2}) -- (\xR,{#2}) -- (\xR,{#2+\wallH});%
    \pgfmathtruncatemacro{\ni}{#3}%
    \ifnum\ni>0%
        \foreach \r in {1,...,\ni} {%
            \pgfmathsetmacro{\by}{#2 + (\r-0.5)*\boxH}%
            \node[box, #4] at (#1, \by) {};%
        }%
    \fi%
}

\def\xA{0.7}   \def\xB{2.1}   \def\xC{3.5}
\def\xD{4.9}   \def\xE{6.3}   \def\xF{7.7}
\def\xG{10.0}  \def\xH{12.4}  \def\xI{14.5}

\drawStack{\xA}{\startBase}{2}{ritem}
\drawStack{\xB}{\startBase}{2}{ritem}
\drawStack{\xC}{\startBase}{4}{ritem}
\drawStack{\xD}{\startBase}{2}{ritem}
\drawStack{\xE}{\startBase}{3}{ritem}
\drawStack{\xF}{\startBase}{3}{ritem}
\drawStack{\xG}{\startBase}{8}{bitem}
\drawStack{\xH}{\startBase}{0}{ritem}
\drawStack{\xI}{\startBase}{8}{bitem}

\drawStack{\xA}{\targetBase}{2}{bitem}
\drawStack{\xB}{\targetBase}{2}{bitem}
\drawStack{\xC}{\targetBase}{4}{bitem}
\drawStack{\xD}{\targetBase}{2}{bitem}
\drawStack{\xE}{\targetBase}{3}{bitem}
\drawStack{\xF}{\targetBase}{3}{bitem}
\drawStack{\xG}{\targetBase}{8}{ritem}
\drawStack{\xH}{\targetBase}{8}{ritem}
\drawStack{\xI}{\targetBase}{0}{ritem}

\node[font=\Large, anchor=east] at (-0.3, {\startBase + 4*\boxH}) {Start};
\node[font=\Large, anchor=east] at (-0.3, {\targetBase + 4*\boxH}) {Target};

\node[stacklbl] at ({(\xA+\xF)/2}, -0.3) {item stacks};
\node[stacklbl] at (\xG, -0.3) {checkpoint};
\node[stacklbl] at (\xH, -0.3) {red target};
\node[stacklbl] at (\xI, -0.3) {filler};

\end{tikzpicture}%
}
\caption{The 2-colored SR construction for the \textsc{3-Partition} instance with
$K=8$ and items $(2,2,4,2,3,3)$: \sconf with \dconf below. Blue items are shown with a hatching pattern.}
\label{fig:2colored-sr}
\end{figure}

\smallskip\noindent\textbf{Structural observation.}
We say that a stack is \emph{cleared} when
every item originally in that stack has moved out of it.
In any solution where each item moves at most once, every item stack and every checkpoint stack must be cleared before any item is placed in it.
Indeed, any incoming item placed in such a stack earlier would permanently trap wrong-colored items that need to be removed.
We therefore track available red and blue target capacity using two counters: let $r$ (resp. $b$) denote the number of currently accessible red (resp. blue) target positions.
Initially $r = K$ (due to the empty red target stack) and $b = 0$.
Clearing item stack~$i$ decreases $r$ by $a_i$ and increases $b$ by $a_i$; clearing a checkpoint stack decreases $b$ by $K$ and increases $r$ by $K$.
At any point in a solution we have $r \geq 0$ and $b \geq 0$.
 
\smallskip\noindent\textbf{Correctness}.
Given a valid 3-partition $S_1, \ldots, S_m$, we construct a one-move-per-item SR solution that operates in $m$ phases:
For $j = 1, \ldots, m$, clear item stacks corresponding to $S_j$, then clear a checkpoint stack, except for $j=m$ where the filler stack is cleared instead.
At the start of each phase $r = K, b = 0$ and one target stack is available for red items: the red target stack for $j=1$, or a checkpoint stack otherwise.
Clearing the $S_j$ stacks brings us to $r = 0, b = K$ as the cleared stacks can now accept blue items.
Clearing the checkpoint resets the state to $r = K, b = 0$.
 
Suppose a valid 2-colored SR solution exists in which every item is moved once.
Let $Q_1, \ldots, Q_{m-1}$ denote the checkpoint stacks in the order in which they are cleared.
Let \(S_1\) be the set of item stacks
cleared before the first checkpoint stack $Q_1$ is cleared, and let $q=\sum_{i\in S_1} a_i$.
Before $Q_1$ is cleared, there are exactly $K$ red target positions available, which are in the initially empty red stack, so \(q\le K\).
Since the
\(K\) blue items originating in $Q_1$ each move exactly once, they must
occupy \(K\) blue target positions in item stacks cleared before this time.
Hence
\(q\ge K\).
Therefore \(q=K\), and since each \(a_i\in(K/4,K/2)\), we have
\(|S_1|=3\) and so $S_1$ corresponds to a valid triple.
Furthermore,
 \(q=K\)
 implies that when $Q_1$ becomes cleared, every checkpoint and item stack other than \(S_1\cup\{Q_1\}\) remains as in the initial configuration.
The red target-only stack is filled to its capacity and $Q_1$ is now empty, and so the initial target capacity state $r = K, b = 0$ is restored.

The same argument repeats for each subsequent \(Q_j\): the item stacks cleared after
\(Q_{j-1}\) is cleared and before \(Q_j\) is cleared correspond to a triple summing to \(K\).
Finally, the number of items in the last 3 item stacks,
cleared after \(Q_{m-1}\) is cleared, must be \(K\), and so we obtain $m$ triples forming a valid 3-partition.
\end{proof}

\section{Hardness of distance-optimal motion}
This section reduces Stack Rearrangement to PMT variants and establishes the hardness of the latter.

\begin{figure}[t]
\centering
\captionsetup[subfigure]{justification=centering}

\tikzset{
    box/.style={draw, thick, minimum width=2.4cm, minimum height=1.3cm, font=\Huge},
    circnode/.style={draw, thick, circle, minimum size=0.9cm, font=\Huge, inner sep=0pt},
    dotnode/.style={fill=black, circle, inner sep=2.5pt}
}

\subcaptionbox{An LSR configuration\label{fig:lsr}}[0.34\columnwidth]{%
\centering
\resizebox{0.34\columnwidth}{!}{%
\begin{tikzpicture}[every node/.style={font=\Large},
    baseline=(current bounding box.south)]
    \def\numCols{4}
    \def\colW{2.8}
    \def\colH{4.6}
    \foreach \i in {0,1,2,3} {
        \pgfmathsetmacro{\xL}{\i*\colW}
        \pgfmathsetmacro{\xR}{\xL+\colW}
        \draw[line width=1.2pt] (\xL,\colH) -- (\xL,0) -- (\xR,0) -- (\xR,\colH);
    }
    \foreach \col/\label/\clr/\by in {
        1/3/red!25/0.65,
        1/2/green!25/2.20,
        1/1/blue!25/3.75,
        2/5/violet!25/0.65,
        2/4/yellow!35/2.20,
        3/6/cyan!25/0.65%
    } {
        \pgfmathsetmacro{\bx}{\col*\colW + \colW/2}
        \node[box, fill=\clr] at (\bx, \by) {\label};
    }
\end{tikzpicture}%
}%
}%
\hfill
\subcaptionbox{Subdivided star\label{fig:star}}[0.25\columnwidth]{%
\centering
\resizebox{0.17\columnwidth}{!}{%
\begin{tikzpicture}[every node/.style={font=\Large},
    baseline=(root.center)]
    \def\yRoot{5.2}
    \def\yA{4.0}
    \def\yB{2.6}
    \def\yC{1.2}
    \def\yBuf{6.4}
    \def\branchSep{2.0}
    \pgfmathsetmacro{\rootX}{\branchSep * (3-1)/2 + \branchSep/2}
    \pgfmathsetmacro{\bI}{\branchSep/2 + 0*\branchSep}
    \pgfmathsetmacro{\bII}{\branchSep/2 + 1*\branchSep}
    \pgfmathsetmacro{\bIII}{\branchSep/2 + 2*\branchSep}

    \node[dotnode] (root) at (\rootX, \yRoot) {};

    \node[dotnode] (buf1) at (\rootX - 1.0, \yBuf) {};
    \node[dotnode] (buf2) at (\rootX - 0.3, \yBuf) {};
    \node[dotnode] (buf3) at (\rootX + 1.3, \yBuf) {};
    \draw[thick] (root) -- (buf1);
    \draw[thick] (root) -- (buf2);
    \draw[thick] (root) -- (buf3);
    \node at (\rootX + 0.5, \yBuf) {\Huge\ldots};

    \node[circnode, fill=blue!25]   (n1) at (\bI, \yA) {1};
    \node[circnode, fill=green!25]  (n2) at (\bI, \yB) {2};
    \node[circnode, fill=red!25]    (n3) at (\bI, \yC) {3};

    \node[dotnode]                  (d2) at (\bII, \yA) {};
    \node[circnode, fill=yellow!35] (n4) at (\bII, \yB) {4};
    \node[circnode, fill=violet!25] (n5) at (\bII, \yC) {5};

    \node[dotnode]                  (d3a) at (\bIII, \yA) {};
    \node[dotnode]                  (d3b) at (\bIII, \yB) {};
    \node[circnode, fill=cyan!25]   (n6)  at (\bIII, \yC) {6};

    \draw[thick] (root) -- (n1);
    \draw[thick] (root) -- (d2);
    \draw[thick] (root) -- (d3a);
    \draw[thick] (n1) -- (n2);
    \draw[thick] (n2) -- (n3);
    \draw[thick] (d2) -- (n4);
    \draw[thick] (n4) -- (n5);
    \draw[thick] (d3a) -- (d3b);
    \draw[thick] (d3b) -- (n6);
\end{tikzpicture}%
}%
}%
\hfill
\subcaptionbox{Comb-like graph\label{fig:comb}}[0.36\columnwidth]{%
\centering
\resizebox{0.36\columnwidth}{!}{%
\begin{tikzpicture}[every node/.style={font=\Large},
    baseline=(sa1.center)]
    \def\yPi{5.2}
    \def\yA{4.0}
    \def\yB{2.6}
    \def\yC{1.2}
    \def\yBuf{6.4}
    \def\piSep{1.2}

    \node[dotnode] (sa1) at (0, \yPi) {};
    \node[dotnode] (cb1) at (\piSep, \yPi) {};
    \node[dotnode] (cb2) at (2*\piSep, \yPi) {};
    \node[dotnode] (cb3) at (3*\piSep, \yPi) {};
    \node[dotnode] (sa2) at (4*\piSep, \yPi) {};
    \node[dotnode] (cb4) at (5*\piSep, \yPi) {};
    \node[dotnode] (cb5) at (6*\piSep, \yPi) {};
    \node[dotnode] (cb6) at (7*\piSep, \yPi) {};
    \node[dotnode] (sa3) at (8*\piSep, \yPi) {};

    \draw[thick] (sa1) -- (cb1) -- (cb2) -- (cb3) -- (sa2)
                       -- (cb4) -- (cb5) -- (cb6) -- (sa3);

    \node[circnode, fill=blue!25]   (c1) at (0, \yA) {1};
    \node[circnode, fill=green!25]  (c2) at (0, \yB) {2};
    \node[circnode, fill=red!25]    (c3) at (0, \yC) {3};
    \draw[thick] (sa1) -- (c1) -- (c2) -- (c3);

    \node[dotnode]                  (c2top) at (4*\piSep, \yA) {};
    \node[circnode, fill=yellow!35] (c4) at (4*\piSep, \yB) {4};
    \node[circnode, fill=violet!25] (c5) at (4*\piSep, \yC) {5};
    \draw[thick] (sa2) -- (c2top) -- (c4) -- (c5);

    \node[dotnode]                  (c3top) at (8*\piSep, \yA) {};
    \node[dotnode]                  (c3mid) at (8*\piSep, \yB) {};
    \node[circnode, fill=cyan!25]   (c6) at (8*\piSep, \yC) {6};
    \draw[thick] (sa3) -- (c3top) -- (c3mid) -- (c6);

    \node[dotnode] (lf1) at (\piSep, \yBuf) {};
    \draw[thick] (cb1) -- (lf1);
    \node[dotnode] (lf2) at (2*\piSep, \yBuf) {};
    \draw[thick] (cb2) -- (lf2);
    \node[dotnode] (lf3) at (3*\piSep, \yBuf) {};
    \draw[thick] (cb3) -- (lf3);
    \node[dotnode] (lf4) at (5*\piSep, \yBuf) {};
    \draw[thick] (cb4) -- (lf4);
    \node[dotnode] (lf5) at (6*\piSep, \yBuf) {};
    \draw[thick] (cb5) -- (lf5);
    \node[dotnode] (lf6) at (7*\piSep, \yBuf) {};
    \draw[thick] (cb6) -- (lf6);

    \node[font=\Huge, above right, xshift=2pt] at (sa1.north east) {$\pi$};
\end{tikzpicture}%
}%
}%

\caption{The reductions in \Cref{thm:pmt}. 
A 4-stack LSR configuration (a) and the corresponding configurations in the subdivided-star (b) and comb-like graph (c) PMT instances.
The buffer leaves are the topmost vertices.}
\label{fig:labeled-pmts}
\end{figure}

\begin{theorem}  \label{thm:pmt}
PMT is NP-hard, even when restricted to (i) subdivided stars of depth 3
or to (ii) trees of maximum degree 3.
\end{theorem}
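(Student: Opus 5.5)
We reduce from LSR with depth-3 stacks, which is NP-hard by Theorem~\ref{thm:lsr}. Given an LSR instance with $w+1$ stacks of depth $d=3$ and $\Npeb$ objects, we build a subdivided star with a center vertex $c$. Each stack becomes a path of $d$ vertices hanging from $c$ (the stack top adjacent to $c$, the bottom at the leaf end). We also attach a number $B$ of \emph{buffer leaves} directly to $c$, as in \Cref{fig:star}. Pebbles start and end at the vertices corresponding to the objects' start and goal positions. Paths have $d=3$ vertices, so the depth is $3$. For (ii), we replace the star center by a path $\pi$ and hang the stack paths from vertices of $\pi$, separated by runs of degree-3 vertices each carrying a single buffer leaf, as in \Cref{fig:comb}. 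Maximum degree is then $3$.

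The intended correspondence is that an LSR solution with $M$ moves yields a PMT solution of cost $\sum(\text{lengths})$. To simulate an object moving from stack $s$ at height $h$ to stack $s'$ at height $h'$, the pebble walks up its path to $c$ and down the other path to the top free vertex. The cost is $d(\text{pos},c)+d(c,\text{pos}')$, a quantity determined by the depths. The buffer leaves act as the empty stacks. We choose $B$ equal to the number of pebbles that can simultaneously be in transit, e.g.\ $B=\Npeb$ or the $m$ empty stacks of Theorem~\ref{thm:lsr}'s instance.

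The cleanest route is to reduce directly from the MFAS instances $I(G)$ built in Theorem~\ref{thm:lsr} rather than from general LSR, and to mirror its counting argument. There is a lower bound $L=\sum_p d(S(p),D(p))$, achieved exactly when each pebble travels along its shortest path, i.e.\ passes $c$ once. Any pebble that detours into a buffer leaf or another path pays at least $2$ extra edges.

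We will show that $G$ has a feedback arc set of size $\ell$ iff the PMT instance has a solution of cost $L+2\ell$. The forward direction follows the topological-order schedule from the proof of Theorem~\ref{thm:lsr}, routing through a buffer leaf (distance $1$ from $c$) each $e_{ji}$ whose edge is in $E'$. Each such detour costs exactly $2$ extra. For the converse, define $E'$ as the edges with $e_{ij}$ or $e'_{ij}$ deviating from a shortest path, so $|E'|\le \ell$. The ordering argument then carries over, because a pebble moving along its shortest path acts on its path exactly like a single pop-and-push: paths are LIFO corridors through $c$. Hence the precedences $o_i \prec e'_{ij} \prec e_{ij} \prec o_j$ still hold for non-deviating pebbles.

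The main obstacle is the converse: PMT allows richer behaviors than pop-and-push. A pebble can enter $c$ or partially enter another corridor and come back, several pebbles can be partially pulled out of a path, and pebbles can pass each other only via branching at $c$. We must argue that any pebble whose walk is not a geodesic costs at least $2$ extra. In a tree any non-geodesic walk has length at least $d+2$, so this holds. We must also argue that geodesic pebbles respect the LIFO precedences. Here we use that within a path, pebbles cannot overtake each other. So if $e_{ij}$ starts above $o_j$ and never leaves via a detour, it must exit the corridor through $c$ before $o_j$ does. Similarly, $e'_{ij}$ must arrive in $o_i$'s goal corridor after $o_i$ has settled beneath it. For (ii), the same argument applies with $\pi$ in place of $c$. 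Since both the comb spacing and the buffer leaves are adjacent to $\pi$, detours still cost exactly $2$, and the distance-preserving choice of spacing keeps the lower bound $L$ tight.
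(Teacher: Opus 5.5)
\textbf{Part (ii) has a genuine gap.} Reducing directly from the MFAS instances $I(G)$ is a legitimate variation on the paper's route through LSR. For (i), your corridor observations suffice, but only because of a fact you never state. In the star, a geodesic pebble occupies the single vertex $c$ for one contiguous interval, from its exit out of its source path to its entry into its target path, and these intervals are pairwise disjoint. This fact chains three relations:
\begin{itemize}
\item $o_i$ last enters its goal path before $e'_{ij}$ does;
\item $e'_{ij}$ leaves the edge path before $e_{ij}$ enters it;
\item $e_{ij}$ leaves $o_j$'s start path before $o_j$ does.
\end{itemize}
Together they give that $o_i$'s last visit to $c$ precedes $o_j$'s last visit to $c$. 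You should make this explicit.

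In (ii), the step ``the same argument applies with $\pi$ in place of $c$'' fails, because $\pi$ is not a single vertex. Geodesic pebbles can park on the spine while other pebbles move. The three local relations then concern different events of $e'_{ij}$ and $e_{ij}$ (an exit versus an entry), and they no longer compose.

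Here is a concrete instance. Take one edge $(i,j)$ and attach along $\pi$, from left to right: the edge path $C_{ij}$, the start and goal paths $P_j, Q_j$ of $o_j$, and then $P_i, Q_i$ of $o_i$. Run the following schedule:
\begin{enumerate}
\item $e'_{ij}$ leaves $C_{ij}$ and stops on $\pi$ between $Q_j$ and $P_i$.
\item $e_{ij}$ moves left into $C_{ij}$.
\item $o_j$ settles in $Q_j$.
\item $o_i$ moves right and settles in $Q_i$.
\item $e'_{ij}$ continues into $Q_i$.
\end{enumerate}
Every pebble follows a shortest path, so the cost is exactly $L$. Yet $o_j$ finishes before $o_i$, so ordering vertices by the final moves of the $o$'s is not topological for $G'$. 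Your claim that a geodesic pebble ``acts exactly like a single pop-and-push'' is false in time order on the comb. The paper flags exactly this issue: several pebbles can be outside the stack paths simultaneously.

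The missing idea is the reordering argument of \Cref{lem:atomicity}. While a geodesic pebble $p$ sits at $v_q$, every other move lies in a single component of $T\setminus\{v_q\}$. So the moves in the component containing $v_{q+1}$ can be performed before $p$'s block, and the rest postponed. By induction each geodesic traversal becomes contiguous, and later applications never split earlier blocks. Only after this normalization does each geodesic pebble behave as one pop-and-push. At that point your precedence chain, ordered by these blocks, goes through on the comb as well.
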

\begin{proof}
We perform reductions from LSR with a stack depth of 3, which is NP-hard by \Cref{thm:lsr}.
See \Cref{fig:labeled-pmts}.

\smallskip
For (i), an LSR instance is converted to a subdivided star by treating each stack as a path with length equal to the stack's depth and then joining the top vertices of the stacks to a single root vertex $c$.
Each PMT vertex on a stack path corresponds to one LSR stack position.
By \Cref{thm:lsr} we can assume that there are as many buffer (initially empty) stacks as objects, so we add $N$ empty buffer stacks, which are leaves emanating from $c$.

Let $L$ denote the sum over all pebbles of the shortest-path distance from the pebble's start to its goal. Then $L$ is a lower bound on the number of moves of any PMT solution. We claim that the LSR instance admits a solution with at most $\ell$ relocations, i.e., moves to a temporary stack, if and only if the PMT instance admits a solution with at most $L + 2\ell$ moves.

Given an LSR solution with $\ell$ relocations, each non-relocated object corresponds to a pebble moved along its shortest path to its goal.
Each of the $\ell$ relocated objects corresponds to a pebble that first detours into an (unoccupied) buffer leaf, adding exactly two moves, before proceeding to its goal. %
The ample buffers guarantee that a buffer is free at the time of each detour.
The PMT solution has a total of $L + 2\ell$ moves.

Conversely, suppose we have a PMT solution with $L + 2\ell$ moves.
Each move of a pebble from the center vertex $c$
to a stack path corresponds to a single pop-and-push LSR move.
Since using a buffer in the PMT instance requires a detour, adding two moves to the motion of a pebble, the resulting LSR solution has at most $\ell$ relocations.

\smallskip
For (ii), we construct a comb-like graph. We use a central path $\pi$ and for each stack, we attach its corresponding stack path to a distinct vertex of $\pi$.
For each pebble, we provide a unique empty leaf vertex that it can detour to and incur exactly two extra moves beyond its shortest path. 
Specifically, for each pebble $p$, let $u_1$ and $u_2$ be the unique vertices of $\pi$ to which the start and target stack path of $p$, respectively, are attached.
Subdivide an arbitrary edge on the subpath of $\pi$ between $u_1$ and $u_2$ by a new vertex $b$, and attach a new leaf to $b$.
The construction adds $O(N)$ vertices and has a maximum degree $3$. %

As in (i), we claim that the LSR instance admits a solution with at most $\ell$ relocations if and only if the PMT instance admits a solution with at most $L + 2\ell$ moves. %

It is straightforward to verify that the forward direction is analogous to (i), so we prove only the backward direction.
This direction requires more careful analysis since the PMT solution can now result in multiple pebbles that are simultaneously located outside of a stack path.

Suppose we have a PMT solution on the comb graph with $L + 2\ell$ moves. We show it yields a valid LSR solution with at most $\ell$ relocations.
Since each detour beyond a shortest path requires at least $2$ extra moves, at most $\ell$ pebbles follow a non-shortest path to their goal.
Call these pebbles \emph{detoured} and the remaining pebbles \emph{direct}.
We present an LSR solution for the direct pebbles that does not involve relocations.
Each detoured pebble can be easily incorporated into this LSR solution as an object $o$ that is relocated exactly once, as a post-processing step:
Each such $o$ is moved to an empty stack (not shared with any other object) and then to its target position when it becomes available, without disturbing other moves.
This accounts for at most $\ell$ relocations.

It remains to show that the direct pebbles yield a valid LSR solution without relocations.
We remove all detoured pebbles and their moves from the PMT solution, and call the resulting solution \sol.

\begin{lemma} \label{lem:atomicity} %
Let $p$ be a pebble whose motion in \sol follows a shortest path $v_1, v_2, \ldots, v_r$ in the comb graph from its start vertex to its goal vertex.
Then moves in the solution \sol can be rearranged (without changing the total move count) so that $p$ moves from $v_1$ to $v_r$ in $r-1$ consecutive steps, i.e., with no other pebble moving during this interval.
We call such a traversal \emph{atomic}.
\end{lemma}

\begin{proof}
We use induction on $r$. The base case $r = 2$ is trivial. For the inductive step, assume that there is a solution \sol' where $p$'s traversal of $v_1, \ldots, v_q$ is atomic.
In $\sol'$, some sequence of moves $M$ by other pebbles occurs between $p$'s arrival at $v_q$ and $p$'s move to $v_{q+1}$. Since $p$ occupies $v_q$ throughout $M$, no move in $M$ crosses $v_q$, so each move lies entirely in one component of $T \setminus \{v_q\}$.
This yields a partition of $M$: $M_{\text{next}}$, the moves in the component containing $v_{q+1}$, and $M_{\text{rest}}$, the moves in the remaining component(s).

We modify the solution as follows: perform the moves of $M_{\text{next}}$ right before $p$'s atomic block, then $p$ traverses $v_1, \ldots, v_{q+1}$ atomically, then perform the moves of $M_{\text{rest}}$. Subsequent moves remain the same.

This reordering is valid: First, since $p$ follows a shortest path in a tree, $v_{q+1} \notin \{v_1, \ldots, v_q\}$, so the component of $T \setminus \{v_q\}$ containing $v_{q+1}$ does not contain $v_1, \ldots, v_{q-1}$. The moves of $M_{\text{next}}$ and $p$'s atomic traversal of $v_1, \ldots, v_q$ thus act on disjoint vertex sets, so $M_{\text{next}}$ can be performed before $p$'s atomic block without affecting validity. Second, the moves of $M_{\text{rest}}$ lie in components of $T \setminus \{v_q\}$ not containing $v_{q+1}$, so they do not interact with $p$'s move from $v_q$ to $v_{q+1}$ and can be deferred.
\end{proof}

Apply the lemma to each pebble in turn.
Processing pebbles in sequence does not invalidate previously established atomic blocks, as any such block within $M$ lies entirely in one component of $T \setminus \{v_q\}$ and is shifted as a unit and never split.
The result is a solution consisting of atomic start-to-target traversals, each corresponding to a pop-and-push move in LSR, as required.
\end{proof}

\Needspace{4\baselineskip}
\begin{theorem} \label{thm:pmt2}
2-colored PMT is NP-hard, even when restricted to (i) subdivided stars or to (ii) trees of maximum degree 3.
\end{theorem}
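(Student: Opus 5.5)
We reduce from 2-colored SR via \Cref{thm:2lsr}. The point of that theorem is that it is already hard to decide whether some solution moves every item at most once. We reuse the two tree constructions from \Cref{thm:pmt} and adapt the lower-bound argument to colors. Given the 2-colored SR instance (red and blue items in $4m+1$ stacks of depth $K$), we build the corresponding 2-colored PMT instances. For (i), each stack becomes a path of length $K$ hanging from a center $c$. For (ii), each stack path is attached to a distinct vertex of a central path $\pi$. Start and target vertex colors are inherited from stack positions. Since assignments inside a color class are free, the natural benchmark $L$ is the cost of a min-cost bipartite matching, per color, between start and target vertices under tree distance. We want to show that the SR instance has a one-move-per-item solution if and only if the PMT instance has a solution with exactly $L^\ast$ moves, where $L^\ast$ is an explicit value defined below.

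\textbf{Choosing the bound.} In the stack instance every stack is either entirely sourced or entirely sinking for each color, so every pebble that changes stack must pass through $c$. I would therefore set $L^\ast=\sum_p(\text{depth of start}+\text{depth of target})$, summed over an assignment induced by a one-move SR solution. On a subdivided star this quantity does not depend on which matching is chosen, as long as no pebble stays within its own stack. This holds because each start vertex lies in a stack whose target color differs everywhere: item stacks go red to blue, checkpoints blue to red, and the filler stack ends empty. Consequently every pebble must leave its path, and each contributes at least (depth to $c$) plus (depth from $c$) moves. Summing gives $L^\ast$ independently of the matching, so any solution has cost at least $L^\ast$. For (ii) I would instead realize the stack paths in the comb so that the passage along $\pi$ adds a fixed amount per color. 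One option is to attach all stacks to $\pi$ in a layout where every red pebble's traversal length along $\pi$ is forced regardless of its target. The simpler fallback is to keep the central part star-like, merging attachment points and using a short path, and to argue that any permutation of targets within a color has equal total $\pi$-length. I expect this to be the most delicate part.

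\textbf{Forward direction.} A one-move SR solution translates directly: each pop-and-push becomes a pebble walking from its position to $c$ (or along $\pi$) and then down the target path. Since the stack top is free and the target position is the next slot, each walk is a shortest route, so the total is exactly $L^\ast$.

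\textbf{Backward direction.} Suppose a PMT solution has $L^\ast$ moves. Then every pebble follows a shortest path to the target it ends at, with no detours; in particular no pebble ever enters a path other than its start and final path. By \Cref{lem:atomicity}, applied to each pebble in turn exactly as in the proof of \Cref{thm:pmt}, the moves can be reordered so that each traversal is atomic. Each atomic traversal is a pop from one stack and a push onto another, giving an SR solution where every item moves exactly once. The main obstacle is the lemma's hypothesis: it needs each pebble to follow a fixed shortest path in the tree, which holds here once the final matching is fixed. On the comb it also needs the total-length invariance to force shortest paths, since a pebble might otherwise trade targets within its color to save moves along $\pi$. If invariance fails, I would pad $\pi$-distances with dummy long subdivisions so that the $\pi$ contributions become equal across same-colored targets.
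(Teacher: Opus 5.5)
\textbf{Part (i)} is correct and is essentially the paper's argument. Every pebble's start path contains no target of its color, so on the star each pebble pays (distance to $c$) plus (distance from $c$). The total $L$ is therefore independent of the within-color assignment, and a cost-$L$ solution is made atomic via \Cref{lem:atomicity}, giving a one-move-per-item SR solution.

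\textbf{Part (ii)} has a genuine gap. The whole difficulty on the comb is to exhibit a maximum-degree-3 layout in which every within-color assignment has the same shortest-path total, and you never fix one. None of your fallbacks works:
\begin{itemize}
\item Merging attachment points into a star-like hub is impossible, because an interior vertex of $\pi$ already has two $\pi$-neighbors and can carry only one stack path.
\item Forcing each red pebble's $\pi$-length independently of its target is impossible, because the $m$ red-sink stacks hang from $m$ distinct vertices of $\pi$.
\item Padding (subdividing $\pi$ or lengthening stack paths) preserves the left-to-right order of attachment points, so it cannot repair an interleaving. If a color has sources at $x_1<x_3$ and sinks at $x_2<x_4$ with $x_1<x_2<x_3<x_4$, the two matchings have $\pi$-costs $(x_2-x_1)+(x_4-x_3)$ and $(x_4-x_1)+(x_3-x_2)$. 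These differ by $2(x_3-x_2)>0$ however you pad.
\end{itemize}
Without invariance, a solution of cost $L^\ast$ could use a cheaper assignment plus detours, and your backward direction fails.

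The missing idea is a specific ordering that exploits the structure of the \Cref{thm:2lsr} instance. The item stacks are exactly the red-source/blue-sink stacks. Every other stack (checkpoints, red target, filler) is a red sink, a blue source, or both. So attach all $3m$ item stacks to one end of $\pi$ and all remaining stacks beyond them, as the paper does. Then every red pebble moves rightward and every blue pebble leftward along $\pi$. A pebble's $\pi$-length is $x(t)-x(s)$ if red and $x(s)-x(t)$ if blue. Summing over any assignment telescopes to a quantity that depends only on the sets of start and target positions. Together with the depth terms from (i), every assignment has the same lower bound $L$. Your forward direction and your backward direction via \Cref{lem:atomicity} then go through unchanged: between atomic traversals no pebble sits on $\pi$, so each traversal is a legal pop-and-push.
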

\begin{proof}
We reduce from 2-colored SR, which is NP-hard by \Cref{thm:2lsr}.

\medskip
For (i), apply the corresponding reduction used in the proof of \Cref{thm:pmt} to yield a subdivided star.
By \Cref{thm:2lsr}, it is NP-hard to decide whether a 2-colored SR instance can be solved without relocations.
Since a zero relocation SR solution corresponds to a PMT solution with cost exactly $L$ (the shortest-path lower bound), and vice versa,
it is NP-hard to decide if a 2-colored PMT instance admits a solution of cost $L$.
Notice that any assignment of pebbles to targets always results in the same distance lower bound of $L$.

\medskip
For (ii), we apply the comb graph reduction used in the proof of \Cref{thm:pmt}, i.e., take $T$ to be a central path $\pi$ to which all the stack paths are connected.
The stacks are ordered along $\pi$ as follows: all $3m$ item stacks on the left, then all $m-1$ checkpoint stacks, the empty red target stack, and the blue filler stack on the right.
No additional empty stacks are used.
It is straightforward to verify that this arrangement of the stacks results in the same distance lower bound $L$ for any valid assignment of pebbles to targets.
A zero-relocation SR solution corresponds to a PMT solution of cost $L$.
Conversely, a cost-$L$ PMT solution forces every pebble to move via a shortest path, and so we apply \Cref{lem:atomicity} to make each pebble's motion atomic to yield a zero-relocation SR solution.
\Cref{lem:atomicity} is valid for the 2-colored case because a solution fixes a unique target and path for each pebble.
\end{proof}

\section{Hardness of time-optimal objectives}
In this section, we prove NP-hardness for time-optimal MAPF objectives, namely makespan and flowtime.

\begin{theorem} \label{thm:mapf-make}
  MAPF is NP-hard for the makespan objective, even when restricted to a tree that is a subdivided star of depth 3.
\end{theorem}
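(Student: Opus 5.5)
The plan is to reduce from LSR with stack depth~3 (\Cref{thm:lsr}) using the same subdivided star as in \Cref{thm:pmt}(i). Each stack becomes a path of length~3 hanging from the center $c$, and $N$ buffer leaves are attached to $c$. The key observation is that $c$ is a serial bottleneck: at every time step at most one agent occupies $c$. Call each maximal stay of an agent at $c$ a \emph{visit}.

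\textbf{Visit count.} In LSR terms, every object whose start and goal lie on different stacks visits $c$ at least once. Every relocation (a detour into a buffer leaf or into another stack) costs at least one additional visit. Hence a schedule in which $V$ visits occur has makespan at least $V+1$: the first agent needs at least one step to reach $c$, and one further step is needed after the last visit ends. The claim to prove is that the LSR instance has a solution with at most $\ell$ relocations if and only if the MAPF instance has makespan at most $T:=M+\ell+\delta$. Here $M$ is the number of objects that must change stack, and $\delta$ is a small constant (I expect $\delta\le 3$) that absorbs start-up and wind-down effects. The constant should be chosen independent of the solution, for instance by padding with a few dummy agents that must visit $c$ first and last. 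This way the final agent to pass $c$ always has a fixed descent length, and the initial delay is fixed as well.

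\textbf{Forward direction.} Given an LSR solution, I will first normalize it. In particular, I will ensure that no pop immediately follows a push onto the same stack, which in a minimal solution would only mean undoing the previous move. I then execute the pop-and-push moves in order, one visit per time step. Before an agent's turn, it waits at the top vertex of its stack path, and the agents below it trail it train-like. Once an agent is pushed, it walks down its goal path in parallel with later visits, and later pushes onto the same stack follow it train-like. Relocations go to free buffer leaves, which are plentiful, and each costs exactly one extra visit. The main thing to verify is that this schedule has no edge conflicts. The dangerous case is an agent ascending an edge $c$--$x_1$ while another descends the same edge, and this occurs only if consecutive moves pop the stack just pushed, which the normalization excludes. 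Vertex conflicts at $x_1$ must be handled the same way. With these checks, the makespan is $M+\ell+\delta$.

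\textbf{Backward direction.} Given a schedule of makespan at most $T$, the bottleneck bound gives at most $M+\ell$ visits, so at most $\ell$ agents visit $c$ more than once or take detours. Ordering the visits by time gives a sequence of pop-and-push moves. An agent leaving a stack path must be topmost among the remaining agents there, since on a path agents cannot pass one another. Likewise, an agent entering a path settles above the agents already in it, so goal order forces the LSR stacking constraints. As in \Cref{thm:pmt}, agents with multiple visits are treated as relocated objects, which gives at most $\ell$ relocations. I expect the main obstacle to be calibrating $\delta$ exactly. Shallow versus deep goals change the tail time, and agents starting deep change the head time, so the lower bound $V+1$ may not match the constructive bound without the dummy-agent padding. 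Getting this padding right, so that both directions use the same $T$, is the step I would work out first.
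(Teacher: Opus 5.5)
Your proposal has a genuine gap exactly where you suspect it. The threshold $T=M+\ell+\delta$ is never fixed, and with the embedding you inherit from \Cref{thm:pmt}(i), where stack bottoms sit at the far ends of the paths, the two directions need different values of $\delta$.

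\begin{itemize}
\item With that placement, the makespan of a pipelined schedule with $V$ visits can be anywhere from $V+1$ to $V+5$. It depends on the depth from which the first agent reaches $c$, and on how deep the last agents to leave $c$ must descend, i.e., on which LSR moves come first and last.
\item Your forward direction therefore needs $\delta$ as large as $5$ in general.
\item Your backward direction uses only the bound makespan $\ge V+1$. From makespan $\le M+\ell+\delta$ this yields $V\le M+\ell+\delta-1$, i.e.\ up to $\ell+\delta-1$ relocations rather than $\ell$.
\item The suggested repair, dummy agents that must visit $c$ first and last, is not shown to be enforceable. Forcing such timing with a dummy whose shortest path is tight against the threshold is impossible in a depth-$3$ star, where all distances are at most $6$.
\end{itemize}

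The missing idea is a different placement of the agents. Put the $a$ items of each stack on $v_1,\dots,v_a$, packed toward $c$ with the top item adjacent to $c$, in both the start and the target configuration.
\begin{itemize}
\item On a pop, the remaining agents of the source path shift one step toward $c$. On a push, the agents of the destination path shift one step away. So every path stays packed, and the next agent to be popped is always at $v_1$.
\item Take an LSR solution with $M$ moves in total, after merging consecutive moves of the same object; merging also rules out the swap on a $c$--$v_1$ edge you worried about. Its transits occupy $c$ at times $1,\dots,M$, and the last agent steps onto its $v_1$ target at time $M+1$. The head and tail are each exactly one step.
\item Conversely, since $c$ starts empty, a schedule of makespan $M+1$ has at most $M$ time steps at which an agent leaves $c$. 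Each induces a pop-and-push respecting LIFO order, because agents on a path cannot pass one another.
\end{itemize}
This gives the clean equivalence: LSR has a solution with at most $M$ moves if and only if the MAPF instance has makespan at most $M+1$. No buffer leaves, padding, or calibration constant are needed; the LSR instance's own empty stacks serve as buffers.

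Alternatively, you could keep your embedding and absorb the additive slack by reducing from $k\ge 5$ disjoint copies of the MFAS instance, since the minimum feedback arc set scales by $k$. The packed placement is much simpler.
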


\begin{proof}
We reduce from LSR with stack depth 3, which is NP-hard by Theorem~\ref{thm:lsr}.
  Given an LSR instance $I$ with $\Npeb$ items, we construct a subdivided star $T$ as in \Cref{thm:pmt}(i):
  each stack of depth $d$ becomes a path $v_1, \ldots, v_d$, and the vertices $v_1$ of all paths are joined to a single central vertex $c$. In the initial configuration, the items of each stack containing $a$ items are placed at vertices $v_1, \ldots, v_a$, packed toward $c$. The target configuration is defined analogously from the target stack configuration: if a stack contains $a$ items in the target configuration, then those items occupy $v_1, \ldots, v_a$, again packed toward $c$.

  We claim that the LSR instance $I$ has a solution with at most $M$ moves if and only if the resulting MAPF instance on $T$ admits a schedule with makespan at most $M + 1$.

  Suppose $I$ has a solution with $M$ moves. Each SR move, which pops the top item of one stack and pushes it onto another stack, is realized in $T$ by moving the corresponding agent through $c$. We schedule these $M$ transits in consecutive time steps, so that $c$ is occupied at times $2,\ldots,M+1$.
  More precisely, at each intermediate time step, the agent currently at $c$ moves to the destination stack of the current SR move, while the agent for the next SR move enters $c$ from the source stack of that move. Within each source stack path, when the agent at $v_1$ moves to $c$, the agents at $v_2, v_3, \ldots$ shift one step toward $c$ in the same time step, so that $v_1$ is always occupied for a non-empty stack.
  Similarly, when an agent moves from $c$ into a destination stack path, the existing agents in that path shift one step away from $c$, so the arriving agent can occupy $v_1$. 
  Thus the first transit enters $c$ at time $1$, and the last transit leaves $c$ at time $M+1$. Hence the MAPF schedule has makespan $M+1$.

  Suppose the MAPF instance on $T$ admits a solution with makespan $M+1$. Since $c$ is empty initially, there are at most $M$ time steps at which an agent located at $c$ goes to a stack path.
  Each such time step corresponds to a pop-and-push move of an item in $I$, hence these time steps define at most $M$ SR moves.
  Along each stack path, agents cannot pass one another, so these induced moves respect the LIFO order.
  Therefore, we obtain a valid LSR solution with at most $M$ moves.
\end{proof}

\begin{theorem} \label{thm:mapf-make-2}
  2-colored MAPF is NP-hard for the makespan objective, even when restricted to a tree that is a subdivided star.
\end{theorem}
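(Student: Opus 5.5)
We reduce from 2-colored SR in the form given by \Cref{thm:2lsr}: deciding whether a solution exists in which every item moves at most once. Since in the 3-Partition construction every item must leave its start stack (item and checkpoint stacks change color, the filler stack empties), such a solution uses exactly $N = 2mK$ moves, and any solution uses at least $N$ moves. So the question is equivalent to whether the instance has a solution with at most $N$ moves. We build the subdivided star $T$ exactly as in \Cref{thm:mapf-make}. Each stack of depth $K$ becomes a path $v_1,\ldots,v_K$ attached to a central vertex $c$, with items packed toward $c$ in both the start and the target configurations. Agents receive the color of their item, and target vertices receive the required color. The claim is that the SR instance has a one-move-per-item solution if and only if the MAPF instance has makespan at most $N+1$.

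For the forward direction, we reuse the scheduling of \Cref{thm:mapf-make} verbatim. The SR moves are performed one per time step through $c$, and agents on a source path shift toward $c$ while agents on a destination path shift away. The result is makespan $N+1$. Colors play no role here: the final configuration has the right colors at the right packed positions.

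For the backward direction, suppose a schedule has makespan $N+1$. Since $c$ starts empty and must end empty, at most $N$ time steps contain an agent leaving $c$ into a stack path. Every agent passing through $c$ yields one SR pop-and-push. Agents cannot pass one another on a path, so these induced moves respect LIFO order, exactly as in \Cref{thm:mapf-make}. The colored setting needs one extra observation: the final occupant of each target vertex has the correct color. Each agent then occupies a position of its color in the packed target stack, so the induced SR sequence reaches a valid colored target.

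The main obstacle is making sure the induced SR solution counts as one move per item. An agent could enter $c$ and return to its own path, or visit several paths. We handle this by counting: each of the $N$ agents must leave its start path at least once, since its start stack must be cleared or recolored. That requires at least $N$ exits from $c$, and at most $N$ are available. Hence each agent passes through $c$ exactly once, each item moves exactly once, and we obtain the desired SR solution.

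We should double-check one point. An agent could move from $v_1$ to $c$ and back into the same stack, which would not "leave". But such an agent would still need a later pass through $c$ to leave, so the bound of at most $N$ exits from $c$ against at least $N$ required passes still forces exactly one pass per agent.
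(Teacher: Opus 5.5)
Your proposal is correct and follows the paper's route: the same subdivided-star construction and the same solution conversions as in \Cref{thm:mapf-make}, applied to the 2-colored SR instances of \Cref{thm:2lsr} with makespan bound $N+1$. Your counting step fills in what the paper's two-line proof leaves implicit. Every item must leave its start stack, so at most $N$ exits from $c$ force exactly one transit per agent, and hence \emph{at most $N$ moves} coincides with \emph{every item moves at most once}.
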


\begin{proof}
  We reduce from 2-colored SR, which is NP-hard by \Cref{thm:2lsr}, exactly as in \Cref{thm:mapf-make}.
  Since we can track moves of the colored case the same way as in the labeled case, the conversions between solutions are identical. %
\end{proof}

We turn to the flowtime objective, where we begin with hardness for the 2-colored case, where the argument is simpler because the reduction only needs to distinguish zero SR relocations from any relocations.
In the labeled case, on the other hand, the reduction must more carefully control the relationship between flowtime and the number of relocations.

\begin{theorem} \label{thm:flow2}
2-colored MAPF is NP-hard for the flowtime objective, even when restricted to a tree that is a subdivided star.
\end{theorem}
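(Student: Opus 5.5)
We reduce from 2-colored SR via \Cref{thm:2lsr}, using the subdivided-star construction of \Cref{thm:mapf-make}. Each stack becomes a path hanging from a central vertex $c$, and the agents are packed toward $c$ in both the start and target configurations. The goal is to compute, from the SR instance alone, a threshold $F^*$ such that flowtime at most $F^*$ is achievable if and only if the SR instance admits a solution in which every item moves at most once. Let $N$ be the number of items.

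\textbf{Step 1: the lower bound.} The central vertex $c$ is a serial bottleneck: at most one agent occupies $c$ per time step. Every agent whose start stack differs from its target stack must pass through $c$, and in the construction of \Cref{thm:2lsr} every item changes stacks. Hence the $N$ agents cross $c$ at distinct times $t_1<\dots<t_N$ with $t_i\ge i$.

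After crossing at time $t_i$, an agent still has to travel down to its target depth, which takes at least $\delta_i\ge 1$ further steps. Its completion time is therefore at least $t_i+\delta_i$.

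The target depths form a fixed multiset, determined by the target stack sizes. The completion times are therefore bounded below by matching crossing slots to target depths. There is a subtlety here: agents entering the same stack later land at shallower depths. I would instead bound each agent's completion time by $t_i+1$ plus the number of agents that enter its stack after it. That extra term is the time spent being pushed deeper, and it counts toward completion whenever the agent is not already at its final depth.

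Summed over all agents, this charges each stack $\binom{s}{2}$, where $s$ is its target size, together with a fixed pipeline cost. This gives a lower bound $F^* = \sum_{i=1}^N (i+1) + \sum_{\text{stacks}} \binom{s}{2}$, possibly plus correction terms for the timing of the push-down shifts. These terms depend only on the instance.

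\textbf{Step 2: forward direction.} Given a zero-relocation SR solution, schedule it exactly as in \Cref{thm:mapf-make}: there is one transit through $c$ per step, with train-like shifting inside the paths. Each agent moves exactly once, and its last arrival coincides with the time the final agent enters its target stack. One checks directly that this schedule meets the lower bound of Step 1. I would choose the bookkeeping in Step 1 so that it is tight precisely for these back-to-back pipelined schedules.

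\textbf{Step 3: converse.} Suppose the flowtime is at most $F^*$. Every inequality in Step 1 must then be tight.

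In particular, $c$ is occupied at every time $2,\dots,N+1$ with distinct agents, so there are exactly $N$ crossings. A relocated item would force some agent to cross $c$ twice. That creates an $(N+1)$-st crossing, which by the pigeonhole argument pushes some crossing time to at least $N+1$ positions. This increases the sum by at least one, unless it is compensated somewhere, and the tightness rules out compensation.

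Once each agent crosses $c$ exactly once, reading off the crossings in order yields LIFO-respecting pop-and-push moves, as in \Cref{thm:mapf-make}. This gives an SR solution with every item moved once.

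\textbf{Main obstacle.} The delicate part is Step 1. The lower bound must be simultaneously valid for all schedules and exactly attained by pipelined one-move schedules. The difficulty is that a stack being filled pushes earlier arrivals deeper, and that a stack being emptied lets its remaining agents idle, and both effects alter completion times.

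My first approach would be a simpler alternative: modify the instance so that completion is governed purely by the crossing order. For example, lengthen each stack path by a long idle prefix, or attach distinct distance offsets, so that the depth adjustments cost the same in every solution. If that fails, I would prove an exchange argument showing that any flowtime-optimal schedule can be assumed pipelined, and then compare crossing counts directly.
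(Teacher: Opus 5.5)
\textbf{There is a genuine gap: Step~1 commits to an invalid bound, and Step~2 uses a schedule that does not attain the threshold.}

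Your overall plan matches the paper's: the serial bottleneck at $c$, distinct last-crossing times, a per-agent lower bound, and tightness forcing a single crossing per agent. Your threshold is even numerically correct. With $\Delta=\sum_{v\in D}d(c,v)$, a stack with $s$ target slots contributes $s(s+1)/2=\binom{s}{2}+s$ to $\Delta$. Hence $\sum_{i=1}^{N}(i+1)+\sum_{\text{stacks}}\binom{s}{2}=N(N+1)/2+\Delta$.

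The problem in Step~1 is the bound you switch to. The bound you wrote first already suffices: $\tau_a\ge t_a+\delta_a$, where $t_a$ is the last time $a$ is at $c$ and $\delta_a$ is the depth of the vertex it finally occupies. It holds for every schedule. Moreover, $\sum_a\delta_a=\Delta$ no matter which agent ends where, because the set of target vertices is fixed. Summing with $\sum_a t_a\ge N(N+1)/2$ completes the lower bound. There is no push-down subtlety, so neither an instance modification nor an exchange argument is needed. Your replacement bound, $t_a+1$ plus the number of agents entering its stack after it, is false in general. An agent can walk straight to its final depth, and later entrants may be transient, so that count can exceed $\delta_a-1$.

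The second problem is Step~2. Suppose you reuse the schedule of Theorem~\ref{thm:mapf-make}, where each arrival lands on $v_1$ and pushes earlier arrivals deeper. Then every agent of a receiving stack completes only when the last arrival to that stack steps onto $v_1$. This exceeds $t_a+\delta_a$ whenever arrivals to that stack are interleaved with other moves. For instance, arrivals to a two-slot stack at crossings $1$ and $3$ give the bottom agent completion time $4>1+2$. So ``one checks directly'' fails for a general zero-relocation solution.

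The paper instead lets each agent, right after its single crossing, descend without waiting to its final vertex. This is conflict-free because, in a one-move-per-item solution, a receiving stack has been cleared before anything enters it and is filled bottom-up. Each later arrival therefore trails the earlier ones toward a shallower vertex. This gives $\tau_a=t_a+\delta_a$ with $\{t_a\}=\{1,\dots,N\}$.

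With the correct bound in place, your Step~3 pigeonhole argument goes through. An extra visit to $c$ would occupy one of the slots $1,\dots,N$ that is already used by another agent's last visit.
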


\begin{proof}
We reduce from 2-colored SR, which is NP-hard by \Cref{thm:2lsr}.
From the construction in \Cref{thm:2lsr}, we may assume that every item starts in a non-goal stack and that it is NP-hard to decide whether an instance with $\Npeb$ items admits a solution using exactly $\Npeb$ moves.

Given such a 2-colored SR instance $I$ with $\Npeb$ items, we construct a subdivided star $T$ with central vertex $c$, together with start and target configurations, exactly as in Theorem~\ref{thm:mapf-make}. %

Set $\Delta \coloneqq \sum_{v \in D} d(c,v)$,
where \(D\) denotes the set of target vertices in $T$.
We claim that $I$ admits a solution with exactly $\Npeb$ moves if and only if the resulting MAPF instance on $T$ admits a solution with flowtime of $\Npeb(\Npeb+1)/2 + \Delta$.

Fix an optimal MAPF solution.
For an agent \(a\), let \(t_a\) denote the timestep at which \(a\) reaches \(c\) for the last time, and let \(\delta_a\) be the distance from \(c\) to the target vertex eventually occupied by \(a\).
After traversing $c$ for the last time, the shortest possible remaining path for $a$ is a descent of length $\delta_a$ into its target stack, so the completion time of $a$ is $t_a + \delta_a$.
Hence, the flowtime satisfies
$ F = \sum_a (t_a + \delta_a) \;=\; \sum_a t_a + \Delta
$.
As the values $\{t_a\}_a$ are distinct positive integers, $F$ is minimized with $F= \Npeb(\Npeb+1)/2 + \Delta$ if and only if $\{t_a\}_a = \{1, 2, \ldots, \Npeb\}$.

Suppose $I$ admits a solution with exactly $\Npeb$ moves.
Each SR move is converted to a single transit of the corresponding agent through $c$, scheduling the $\Npeb$ transits to reach $c$ at consecutive timesteps $1, 2, \ldots, \Npeb$.
This is done by having all agents in a stack move up simultaneously as the topmost agent leaves.
Each agent $a$ descends directly into its target vertex without waiting after visiting $c$, resulting in a flowtime of $\Npeb(\Npeb+1)/2 + \Delta$.

Suppose the MAPF instance on $T$ admits a solution with flowtime at most ${\Npeb(\Npeb+1)/2 + \Delta}$.
That is, we have $\{t_a\}_a = \{1, \ldots, \Npeb\}$.
The equality is only possible if each agent visits $c$ once, which corresponds to exactly one valid move per object in the SR solution.
\end{proof}

For the labeled case under the flowtime objective, repeating the reduction above would not work since the order in which relocations occur affects flowtime: earlier relocations correspond to increased agent completion times.
The following reduction normalizes such variations by adding many agents that would be delayed by relocations  regardless of when they occur in an LSR solution.

\begin{theorem} \label{thm:flowtime}
MAPF is NP-hard for the flowtime objective, even when restricted to a tree that is a subdivided star.
\end{theorem}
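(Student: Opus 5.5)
We reduce from LSR with stack depth~3 (Theorem~\ref{thm:lsr}), reusing the subdivided-star construction of Theorem~\ref{thm:mapf-make}: stacks become paths hanging from a center $c$, with items packed toward $c$, together with ample empty buffer paths.

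\textbf{Flowtime accounting.} As in Theorem~\ref{thm:flow2}, let $t_a$ be the last time agent $a$ is at $c$. Then $F=\sum_a t_a+\Delta$, and the $t_a$ are distinct.

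\textbf{Why the plain reduction fails, and the fix.} If $I$ is solved with $\Npeb+\ell$ moves, the transits through $c$ occupy $\Npeb+\ell$ time slots. The $\ell$ extra slots push later final transits back, but by an amount that depends on where the relocations occur. To normalize this, I add a large block of $B$ \emph{padding} agents. These are $B$ trivial stacks, each a path whose single agent must move from its own start path to a distinct target path, and they are placed so they can only be served after all original agents finish. To force them last, I give them longer paths, or equivalently I argue that any interleaving only increases cost through an exchange argument. With this in place, each extra transit through $c$ delays all $B$ padding agents by one step. Each relocation therefore costs about $B$, while the contribution of the original agents varies only by $O((\Npeb+\ell)^2)$. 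Choosing $B\gg \Npeb^2$, e.g.\ $B=(2\Npeb)^3$ with $\ell\le \Npeb$ relocations w.l.o.g., makes the optimal flowtime roughly $C_0+B\ell+\text{(small)}$. We then threshold: $I$ has a solution with at most $\ell$ relocations iff the flowtime is at most $C_0+B\ell+R$, where $R$ bounds the small term.

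\textbf{Correctness.} Forward: schedule the LSR moves consecutively through $c$, as in Theorem~\ref{thm:mapf-make}, then the padding agents. Relocations use buffer leaves, with one transit through $c$ into the buffer and one back out. Backward: counting visits to $c$, a schedule with $\ell'$ extra transits delays the padding agents by at least $\ell'$ each. It also induces an LSR solution with $\ell'$ relocations, since moves into paths correspond to pop/push operations and LIFO order is preserved.

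\textbf{Main obstacle.} The hardest step is proving that padding agents are essentially all served after the original agents in an optimal schedule, or at least that interleaving cannot save more than $R$. I would first try an exchange argument: swapping a padding transit to a later position and an original transit to an earlier one never increases $\sum t_a$. If that proves delicate, the fallback is to place the padding agents' starts deep in paths blocked by original agents' targets, which forces the order structurally.
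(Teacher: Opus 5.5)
Your flowtime accounting ($F=\sum_a t_a+\Delta$, with each extra transit through $c$ charged by the number of final transits after it) matches the paper. So does your large padding block with a threshold of the form $C_0+B\ell+R$, where $B$ exceeds $\ell$ times the number of non-padding agents. There is a genuine gap, though, at the step you flag yourself. Your backward claim, that a schedule with $\ell'$ extra transits delays every padding agent by at least $\ell'$, holds only for extra transits that occur \emph{before} all padding final transits. Nothing in your construction forces that, and a cost-minimizing schedule pushes extra transits as late as possible.

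None of your proposed fixes closes this:
\begin{itemize}
\item Your exchange argument points the wrong way. Moving a padding final transit from slot $s$ to $s+1$ past an original \emph{extra} transit raises $\sum_a t_a$ by one, because the extra transit is not counted. So the claimed ``never increases'' fails exactly in the case that matters.
\item Longer padding paths only change $\Delta$ or bound from below when the padding block can start. They do not stop a schedule from running the padding block in the middle and deferring part of the original instance until afterwards. For instance, if the LSR instance is a disjoint union of two parts, a schedule can solve one part, run the padding block, and then solve the other. The deferred relocations delay only the few original final transits after them, so they cost $O(N)$ rather than $B$ each, and the threshold can be met with more than $\ell$ relocations.
\item Your fallback also goes the wrong way. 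Padding starts lying beneath original targets force padding agents to leave \emph{before} those original agents settle. The reverse, padding targets beneath original starts, only orders padding after particular original departures, not after every relocation and every final arrival.
\end{itemize}

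The missing ingredient is a gadget that certifies that all original activity has ended and that stays robust when a solution spends a few extra transits to cheat. The paper builds it as follows:
\begin{itemize}
\item Between $c$ and each stack path $\sigma$ it inserts $\ell+1$ vertices $b^1_\sigma,\dots,b^{\ell+1}_\sigma$, targeted by $\ell+1$ seal agents.
\item The seal agents start in an auxiliary path $H_\sigma$, stacked on top of a gate agent $g_\sigma$.
\item The gates and the padding agents share a target path $Z$, with the gates deepest.
\end{itemize}
Every padding final transit then comes after the last gate transit $\tau$. Hence each of the $r$ extra transits before $\tau$ costs the full padding count, and the flowtime bound forces $r\le\ell$. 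The seals of $\sigma$ leave $H_\sigma$ before $g_\sigma$ does, so at least one of them must go directly to its $b$-vertex before $\tau$, which closes $\sigma$ permanently. Consequently no original agent transits after $\tau$, and every relocation is among the $r\le\ell$ counted ones. Without some mechanism of this kind, your backward direction does not go through.
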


\begin{proof}
We reduce from LSR, which is NP-hard by \Cref{thm:lsr}.
Given an LSR instance $I$ with a relocation bound $\ell$, we construct a subdivided star $T$ with center vertex $c$ as in \Cref{thm:pmt}(i):
each stack is represented by a path joined to $c$.

We modify $T$ as follows for each stack path $\sigma$:
Replace the edge connecting $\sigma$ to $c$ by a path containing $\ell+1$ new vertices $b^1_\sigma,\dots,b^{\ell+1}_\sigma$. 
Introduce $\ell+1$ \emph{seal agents} $s^1_\sigma,\dots,s^{\ell+1}_\sigma$, with target vertices $b^1_\sigma,\dots,b^{\ell+1}_\sigma$, respectively.
Introduce a \emph{gate agent} $g_\sigma$.
Initially the seal agents are located in a new stack path $H_\sigma$ of length $\ell+2$, above $g_\sigma$.

Finally, introduce $K$ (whose value is specified later) \emph{padding agents} in a new start stack path, ordered from top to bottom as $q_1,\dots,q_K$.
The gate agents and padding agents have a common new target stack $Z$, where the gate agents must occupy the deepest positions, in an arbitrary fixed order, and the padding agents occupy the positions above them, ordered by $q_1,\dots,q_K$ from bottom to top.

Let $N'$ be the total number of agents in $T$.
Let $F_0 \coloneqq N'(N'+1)/2+\Delta$ be the flowtime lower bound, corresponding to a solution with no relocations, as defined in the proof of \Cref{thm:flow2}.
We claim that $I$ admits a solution with at most $\ell$ relocations if and only if the MAPF instance on $T$ admits a solution with flowtime at most $B \coloneqq F_0+\ell N'$.

Suppose that $I$ admits a solution with at most $\ell$ relocations.
We simulate each LSR move by a corresponding transit through $c$ as in the 2-colored case.
That is, $c$ is occupied at every timestep except the last and after visiting $c$ for the last time, each agent descends into its target without waiting.
After all original LSR moves
have been simulated, move the seal agents ("sealing" the original stacks) and then the gate agents directly to their respective targets.
Finally, move the $K$ padding agents directly to their targets.
The LSR relocations correspond to at most $\ell$ \emph{extra transits} through $c$, i.e., an agent passing through $c$ but not for the last time.
Each such transit delays at most all $N'$ final transits by one time step.
Hence, the MAPF solution has a flowtime $F \le F_0+\ell N'=B$, as required.

Conversely, suppose there is a MAPF solution with a flowtime $F \le B$.
Let $\tau$ be the last timestep at which a gate agent transits through $c$.
Each padding agent's final transit through $c$ occurs after $\tau$, since the gate agents need to occupy the deepest vertices in $Z$.
Hence, each extra transit through $c$ occurring before $\tau$ precedes the $K$ padding agents' final transits.
Let $r$ be the number of extra transits before $\tau$.
We have $F \ge F_0 + rK$ since at least $K$ agents' final transits are delayed by $r$ timesteps compared to the lower bound.
Let $m$ denote the number of non-padding agents, which is fixed for a given $I$.
Choose $K$ such that $K > \ell m$.
If $r \ge \ell + 1$, then $rK \ge (\ell+1)K > \ell K + \ell m = \ell N'$, so $F > F_0 + \ell N' = B$, which is a contradiction. Thus $r \le \ell$.

Fix a stack path $\sigma$.
Each seal agent $s^t_\sigma$ lies above $g_\sigma$ in $H_\sigma$, so it leaves $H_\sigma$ before $g_\sigma$ does, hence before $\tau$. If every seal agent of $\sigma$ made an extra transit upon leaving $H_\sigma$, these transits would give $\ell+1$ extra transits before $\tau$, contradicting $r \le \ell$.
Hence, some seal agent of $\sigma$ moves from $H_\sigma$ to $b^t_\sigma$ using a single transit through $c$.
From then on $b^t_\sigma$ is occupied, so no original agent, i.e., corresponding to an LSR object, can enter or leave $\sigma$.
Since this occurs before $\tau$, no original agent enters or leaves $\sigma$ after $\tau$.
Since this holds for every $\sigma$, no original agent transits through $c$ after $\tau$, so every extra transit of an original agent is among the $r \le \ell$ transits before $\tau$.
We can therefore obtain a corresponding LSR solution, which is extracted by ignoring seal, gate and padding agents, with at most $\ell$ relocations, as required.
\end{proof}

\section{Conclusion}
We have shown that stack-based motion is a simple source of hardness across various fundamental motion coordination problems, settling the decades-old question of the complexity of pebble motion on trees.
A key feature of the framework is that the same underlying reduction structure holds for distinct objectives, i.e., distance, makespan, and flowtime, which have typically required dedicated reductions or more involved gadget adaptations in previous work.
There is therefore potential for a reusable basis for further hardness results in highly restricted tree-like settings. %
At the same time, any positive results must now account for these basic hardness results, which can serve as a baseline obstruction that could inform, e.g., approximation results.

The results are tight along natural axes:
For the colored variants, hardness already appears with two colors, whereas the unlabeled motion is efficiently solvable~\cite{YuUnlabeledMAPF}.
Another boundary is established with respect to the maximum degree of the tree: our PMT reductions give hardness for maximum degree 3, while maximum degree 2 restricts the tree to a path, where the motion can be optimized efficiently. %

On the labeled side, the LSR construction uses stacks of depth 3, which is also the depth of the subdivided stars for distance and makespan.
We believe that the LSR construction can be tightened to stacks of depth 2; this refinement is left to a subsequent version.

Our analysis also identifies a tractable boundary case, which may be developed further to improve PMT algorithms.
By \Cref{lem:atomicity}, if a PMT solution attains the shortest-path lower bound, then its moves can be reordered so that each pebble performs its entire shortest-path traversal using consecutive moves. Consequently, for labeled PMT, one can test whether the lower bound is attainable by searching for an ordering of such atomic traversals.
This can be done in polynomial time since monotone motion along fixed paths reduces to cycle detection on a dependency graph~\cite{DBLP:conf/icra/Buckley89, DBLP:journals/comgeo/AbellanasBHORT06}.
This is also noteworthy because the analogous lower-bound attainability problem is NP-hard on 2D grids~\cite{GH2021AAMAS}.

A natural next step is to
identify the exact boundary between NP-hardness and polynomial-time solvability on trees.
In particular, can the hardness results be further
tightened by reducing parameters such as depth and the maximum degree? %
Are there efficient algorithms for special cases?

\bibliography{MRMP_references}

\end{document}